\documentclass{article}

\usepackage{fullpage}
\usepackage{parskip}

\usepackage{textcomp}
\usepackage{bm}
\usepackage{dsfont}
\usepackage{newtxtext}

\usepackage[utf8]{inputenc}
\usepackage[T1]{fontenc}    
\usepackage{booktabs}       
\usepackage{nicefrac}       

\usepackage{etoolbox}
\let\etbforlistloop\forlistloop
\cslet{blx@noerroretextools}\empty%
\usepackage[date=year,eprint=false,doi=false,isbn=false,maxcitenames=2,natbib,url=false,sorting=nyt,sortcites=true,style=trad-abbrv,backref=true]{biblatex}

\DeclareSourcemap{
    \maps[datatype=bibtex]{
        \map{
            \step[fieldset=editor, null]
            \step[fieldset=series, null]
            \step[fieldset=language, null]
            \step[fieldset=address, null]
            \step[fieldset=location, null]
            \step[fieldset=month, null]
            \step[fieldset=annotated, null]
        }
    }
}
\usepackage{amsmath,amssymb,amsthm,mathtools}
\usepackage{thmtools}
\usepackage{thm-restate}
\usepackage{microtype}
\usepackage{algorithm}
\usepackage[noend]{algpseudocode}
\usepackage{enumitem}
\usepackage{CJKutf8}
\usepackage[svgnames]{xcolor}
\usepackage[colorlinks=true,citecolor=Navy,linkcolor=Maroon,urlcolor=Orchid,bookmarksnumbered,hypertexnames=false,pdfdisplaydoctitle,pdfusetitle,unicode]{hyperref}
\usepackage{todonotes}
\usepackage{ulem}
 
\theoremstyle{plain}
\newtheorem{theorem}{Theorem}
\newtheorem{lemma}{Lemma}
\newtheorem{proposition}{Proposition}

\theoremstyle{definition}

\newtheorem{assumption}{Assumption}
\newtheorem{remark}{Remark}

\newcommand{\R}{\mathbb{R}}
\newcommand{\Z}{\mathcal{Z}}
\newcommand{\X}{\mathcal{X}}
\newcommand{\VV}{\mathcal{V}}
\newcommand{\HH}{\mathcal{H}}
\newcommand{\argmin}{\operatorname*{arg\,min}}
\newcommand{\argmax}{\operatorname*{arg\,max}}

\newcommand{\rank}{\operatorname{rank}}
\newcommand{\tr}{\operatorname{tr}}
\newcommand{\Id}{\mathrm{I}}
\newcommand{\ip}[2]{\langle #1, #2\rangle}

\newcommand{\normA}[2]{\norm*{#1}_{#2}}
\newcommand{\normAsq}[2]{\norm*{#1}_{#2}^{2}}

\usepackage{upref}
\usepackage[capitalize,noabbrev]{cleveref}
\usepackage{autonum}
\let\forlistloop\etbforlistloop

\crefname{step}{Step}{Steps}
\Crefname{step}{Step}{Steps}
\crefname{assumption}{Assumption}{Assumptions}
\Crefname{assumption}{Assumption}{Assumptions}
\crefname{appendix}{Appendix}{Appendices}
\Crefname{appendix}{Appendix}{Appendices}
\pretocmd{\appendix}{\crefalias{section}{appendix}}{}{}
\let\ip\relax
\DeclarePairedDelimiterX{\ip}[2]{\langle}{\rangle}{#1,\,#2}
\let\prn\relax
\DeclarePairedDelimiter{\prn}{(}{)}
\let\set\relax
\DeclarePairedDelimiter{\set}{\{}{\}}
\let\Set\relax
\DeclarePairedDelimiterX{\Set}[2]{\{}{\}}{\,{#1}\,:\,{#2}\,}

\let\Brc\relax
\DeclarePairedDelimiterX{\Brc}[2]{[}{]}{\,{#1}\,\middle|\,{#2}\,}

\let\norm\relax
\DeclarePairedDelimiter{\norm}{\|}{\|}

\newcommand{\savednormallabel}{}
\AtBeginDocument{\global\let\savednormallabel\label}
\newcommand{\restorenormallabel}{\global\let\label\savednormallabel}
\makeatletter
\newcommand{\wraprestatablewithlabelrestore}[1]{%
  \csletcs{restatable@orig@#1}{#1}%
  \csdef{#1}{%
    \@ifstar{%
      \csuse{restatable@orig@#1}*%
      \restorenormallabel
    }{%
      \csuse{restatable@orig@#1}%
    }%
  }%
}

\title{Simple Projection-Free Algorithm for Contextual Recommendation with Logarithmic Regret and Robustness}

\author{%
Shinsaku Sakaue%
\thanks{CyberAgent, Tokyo, Japan. National Institute of Informatics, Tokyo, Japan. Center for Advanced Intelligence Project, RIKEN, Tokyo, Japan. Email: shinsaku.sakaue@gmail.com.}%
}

\date{}

\begin{document}
\maketitle
\begin{abstract}
Contextual recommendation is a variant of contextual linear bandits in which the learner observes an (optimal) action rather than a reward scalar.
Recently, Sakaue et al.~(2025) developed an efficient Online Newton Step (ONS) approach with an $O(d\log T)$ regret bound, where $d$ is the dimension of the action space and $T$ is the time horizon.
In this paper, we present a simple algorithm that is more efficient than the ONS-based method while achieving the same regret guarantee.
Our core idea is to exploit the improperness inherent in contextual recommendation, leading to an update rule akin to the second-order perceptron from online classification.
This removes the Mahalanobis projection step required by ONS, which is often a major computational bottleneck.
More importantly, the same algorithm remains robust to possibly suboptimal action feedback, whereas the prior ONS-based method required running multiple ONS learners with different learning rates for this extension.
We describe how our method works in general Hilbert spaces (e.g., via kernelization), where eliminating Mahalanobis projections becomes even more beneficial.
\end{abstract}

\section{Introduction}
A pervasive challenge in data-driven decision-making is learning from observed \emph{actions} rather than explicit utility signals: a system records what a user or expert \emph{chooses}---a purchased item, a selected route, a clinical treatment---without directly observing the hidden preference that motivated the choice.
This action-only feedback regime arises in various fields, e.g., inverse optimization~\citep{Ahuja2001-cv,Chan2025-bl}, inverse reinforcement learning~\citep{Ng2000-sf}, and learning from revealed preference~\citep{Birge2022-fq}.
\emph{Contextual recommendation} (a.k.a.~\emph{online inverse linear optimization}) is a fundamental online learning problem that formalizes such action-only feedback scenarios, in which the learner sequentially recommends actions and infers a user's hidden preference from observed user actions.
This problem is a variant of (contextual) linear bandits \citep{Dani2008-uo,Abbasi-yadkori2011-st}---in the standard setting, the learner observes a scalar reward, whereas here the learner observes an (optimal) action---and has been extensively studied \citep{Barmann2017-wl,Barmann2020-hh,Gollapudi2021-ad,Besbes2021-ak,Besbes2025-ck,Sakaue2025-fe,Sakaue2025-vb,Oki2026-kc}.

Specifically, let $\VV$ be a real Hilbert space with inner product $\ip{\cdot}{\cdot}$ 
(one may think of $\VV=\R^d$ with the standard dot product as a canonical example).
At each round $t$, the learner observes a nonempty feasible set $\X_t\subseteq\VV$, recommends an action
$\hat x_t\in\argmax_{x\in\X_t}\ip{\hat w_t}{x}$ for some prediction $\hat w_t\in\VV$, and then observes a user action $x_t\in\X_t$, which is optimal (or approximately optimal) with respect to the user's unknown preference vector $u\in\VV$.
The learner's performance is measured by the regret, 
$R_T(u)\coloneqq\sum_{t=1}^T\ip{u}{x_t-\hat x_t}$, which quantifies the cumulative gap between the utilities of the user's actions and the learner's recommendations over $T$ rounds. 
In what follows, we refer to $\hat x_t -x_t$ as a \emph{residual} at round $t$; 
the regret is the inner product of $u$ with the negative cumulative residual.

\citet{Sakaue2025-vb} recently proposed an algorithm based on the Online Newton Step (ONS) with the state-of-the-art regret bound of $O(d\log T)$ under $\VV=\R^d$; 
this is the first \emph{efficient} logarithmic-regret algorithm in the sense that the per-round time complexity is independent of $T$.
However, an ONS update involves a \emph{Mahalanobis projection} step at each round. 
This is often the computational bottleneck, especially in high-dimensional or kernelized settings.
Specifically, when $\VV=\R^d$ and the learner's domain is the unit ball, the projection step solves
$\min_{w: \norm{w}_2\le 1}(w'-w)^\top A (w'-w)$
for some $w'\in\R^d$ and a positive-definite matrix $A\in\R^{d\times d}$.
Even in this unit-ball case, the exact Mahalanobis projection requires at least $\Omega(d^\omega)$ arithmetic operations, where $\omega \in [2,3]$ is the matrix multiplication exponent, while standard exact implementations typically take $O(d^3)$ time; see \citet{Wang2025-lj} for a detailed discussion.
The ONS-based approach of \citet{Sakaue2025-vb} involves such a projection step at every round, contributing a substantial computational overhead.
Therefore, the core question we address is: \emph{can we avoid Mahalanobis projections while retaining logarithmic regret guarantees?}\looseness=-1

\begin{table}[t]
  \centering
  \caption{Total time complexity over $T$ rounds under $\VV=\R^d$. 
    For the ONS-based methods, the learner's domain is the unit ball.
    Let $\tau_{\mathrm{opt}}(t)$ be the time to solve $\hat x_t\in\argmax_{x\in\X_t}\ip{\hat w_t}{x}$ and $\omega \in [2,3]$ the matrix multiplication exponent.   
  All methods enjoy $R_T(u)=O(d\log T)$ regret bounds under optimal action feedback. For details of these complexity bounds, see \cref{sec:implementation} and \cref{app:ons-comparison}.}
  \medskip
  \begin{tabular}{ll}
    \toprule
    Method & Total time complexity \\
    \midrule
    ONS-based method
    \citep{Sakaue2025-vb}
    &
    $O\prn*{T d^\omega + \sum_{t=1}^T \tau_{\mathrm{opt}}(t)}$
    \\
    \midrule
    ONS-based method \citep{Sakaue2025-vb} with LightONS \citep{Wang2025-lj}
    &
    $O\prn*{T d^2 + d^\omega \sqrt{T\log T} + \sum_{t=1}^T \tau_{\mathrm{opt}}(t)}$
    \\
    \midrule
    CoRectron (\cref{alg:second-order})
    &
    $O\prn*{T d^2 + \sum_{t=1}^T \tau_{\mathrm{opt}}(t)}$
    \\    \bottomrule
  \end{tabular}
  \label{tab:finite-dim-cost-comparison}
\end{table}

\subsection{Our contributions}
We summarize our contributions at a high level and then outline our core idea and analysis techniques.
\begin{enumerate}[leftmargin=*]
  \item \textbf{Simple projection-free algorithm.} We present a simple and efficient algorithm (\cref{alg:second-order}) that avoids Mahalanobis projections by exploiting the improperness arising in contextual recommendation. 
  As summarized in \cref{tab:finite-dim-cost-comparison}, our algorithm has lower total time complexity than the previous ONS-based method \citep{Sakaue2025-vb} and the one accelerated by LightONS, a recent fast ONS variant \citep{Wang2025-lj}.
  \item \textbf{Logarithmic regret bound.} We prove a regret bound controlled by a log-determinant potential (\cref{thm:main}), which yields the same bound of $O(d\log T)$ as \citet{Sakaue2025-vb} when $\VV=\R^d$.
  \item \textbf{Robustness to suboptimal feedback.} The same algorithm achieves a regret bound that scales with the square root of the cumulative suboptimality of feedback actions (\cref{thm:robust-demo}).
  \item \textbf{Contextual models.} We study the problem in a Hilbert space and show how contextual models can be handled in a unified manner, where eliminating Mahalanobis projections is even more beneficial.
  \item \textbf{Empirical support.} Experiments in \cref{app:dashboard-layout-preview} show that our algorithm is faster and achieves smaller regret than the ONS-based method.
  Moreover, it is stable across hyperparameter choices.
\end{enumerate}

\textbf{Our core idea: exploiting improperness.\;}
In contextual recommendation, the learner recommends actions $\hat x_t \in \argmax_{x\in\X_t}\ip{\hat w_t}{x}$ based on predicted utility vectors $\hat w_t \in \VV$.
Therefore, positive rescaling of $\hat w_t$ leaves the recommendation unchanged.
This leads to an ``improper'' viewpoint: we do not need any scale constraint on $\hat w_t$; in particular, $\|\hat w_t\|$ is allowed to exceed $\|u\|$, the (unknown) scale of the user's preference vector.
Building on this observation, our algorithm performs a projection-free second-order update built from the past residuals.
This update is reminiscent of the classical second-order perceptron~\citep{Cesa-Bianchi2005-on} from online classification.
Motivated by this connection, we name our method CoRectron (\uline{co}ntextual \uline{rec}ommendation via second-order percep\uline{tron}) for ease of reference.

\textbf{Regret analysis.\;}
Our analysis rests on a one-sided optimality inequality, $\ip{\hat w_t}{\hat x_t-x_t}\ge 0$, that follows from the optimality of $\hat x_t$ for $\hat w_t$.
This yields a sign condition that enables us to control the growth of a potential of the cumulative residuals (\cref{lem:potential-growth}). 
Combined with the elliptical potential lemma, this leads to a template regret bound controlled by the log-determinant of the Gram matrix of residuals (\cref{thm:main}), which gives an $O(d\log T)$ regret bound in the case of $\VV=\R^d$.
Interestingly, this clean logarithmic regret guarantee contrasts with existing mistake bounds of the second-order perceptron for online classification, which involve margin-dependent terms that may be arbitrarily large in the worst case \citep{Cesa-Bianchi2005-on,Orabona2012-xt}.
We elaborate on this difference in \cref{sec:algorithm}.

\textbf{Robustness.\;}
Importantly, our template bound \eqref{eq:main-bound-optimized} continues to hold even when the observed actions can be suboptimal. 
Consequently, without changing the algorithm, we can establish a regret bound that scales with the square root of the cumulative suboptimality of the feedback actions (\cref{thm:robust-demo}). 
This result contrasts with the approach of \citet{Sakaue2025-vb} for obtaining the same robustness guarantee: they use MetaGrad \citep{van-Erven2021-ji}, which runs $O(\log T)$ ONS learners with different learning rates to adapt to an unknown suboptimality level, thereby increasing the computational cost by a factor of $O(\log T)$.

\textbf{Organization.\;}
The rest of this paper is organized as follows.
\Cref{sec:related-work} overviews related work.
\cref{sec:setup} formalizes the problem.
\cref{sec:method} presents CoRectron and the template regret bound.
\cref{sec:robust} provides a regret bound for suboptimal feedback actions.
\cref{sec:lifting} describes a unified view for handling contextual models. 
\cref{sec:implementation} discusses implementation and major computational factors.

\subsection{Related work}\label{sec:related-work}
\Citet{Barmann2017-wl,Barmann2020-hh} introduced the problem setting of contextual recommendation (as an approach to inverse optimization) and proposed a method based on online convex optimization (OCO) with an $O(\sqrt{T})$ regret bound.
Later, cutting-plane-based methods improved the dependence on $T$: 
\citet{Besbes2021-ak,Besbes2025-ck} gave an $O(d^4\log T)$ regret bound, and \citet{Gollapudi2021-ad} established $O(d \log T)$ and $\exp(O(d\log d))$ regret bounds.
These cutting-plane methods require per-round time polynomial in~$T$, whereas \citet{Sakaue2025-vb} achieved an $O(d\log T)$ regret bound with per-round time independent of~$T$ via ONS. 
Under additional structural assumptions, finite regret bounds smaller than $\exp(O(d\log d))$ were also obtained \citep{Sakaue2025-fe,Oki2026-kc}, while our focus is on the general setting without such assumptions.

ONS is a well-established OCO algorithm \citep{Hazan2007-ta}, and its Mahalanobis projection step is recognized as a major computational bottleneck, leading to a line of work on faster online exp-concave optimization \citep{Luo2016-sv,Mhammedi2019-iw,Mhammedi2023-hi,Wang2025-lj}. 
In particular, \citet{Wang2025-lj} recently proposed LightONS, an efficient ONS variant that reduces projection overhead; however, the projection step still constitutes a non-negligible cost. 
Thus, although the method of \citet{Sakaue2025-vb} can be accelerated by plugging in LightONS, the resulting method still has larger worst-case complexity and hence is not as efficient as our projection-free method, as summarized in \cref{tab:finite-dim-cost-comparison}. 
Moreover, our method remains robust to suboptimal feedback and extends to general Hilbert spaces; these are non-trivial extensions not covered by the ONS approach.

Kernelized second-order online learning has also been widely studied, including sketching/embedding methods for acceleration \citep{Calandriello2017-tj,Calandriello2017-cu}. These approaches typically involve an efficiency--regret trade-off. Hence, even when such tools are plugged into the ONS approach for contextual recommendation, they generally improve runtime at the price of weaker guarantees. Similar acceleration techniques may also be useful for our method, and exploring this is an interesting direction for future work.

For projection-free exp-concave OCO with a linear-optimization oracle, the best regret bound is $\tilde O(d^{2/3}T^{2/3})$ \citep{Garber2023-mr}, where the tilde hides logarithmic factors; 
more generally, projection-free OCO with a separation oracle can achieve $\tilde O(\sqrt{dT}+d^2)$~\citep{Mhammedi2025-kv}. 
These are much larger than logarithmic regret, and hence our $O(d\log T)$ bound is specific to the improper structure of contextual recommendation.

Finally, our perspective is different from unconstrained/parameter-free OCO \citep{Mcmahan2012-tf,Cutkosky2018-cy,Jacobsen2023-zg,Cutkosky2024-vv}.
That line of work aims to compete with comparators of unknown scale with minimal regret degradation from that achievable when the comparator scale is known.
By contrast, we exploit the improperness induced by scale invariance in contextual recommendation: we may impose any normalizing assumption on the scale of the user preference (e.g., $\norm{u}\le 1$) while allowing the learner's predictions to exceed that scale, and we use this asymmetry to reduce the computation cost without worsening regret guarantees.

%

\section{Problem setting and background}\label{sec:setup}
Let $\VV$ be a real Hilbert space with inner product $\ip{\cdot}{\cdot}$ and induced norm $\norm{\cdot}$.
For rounds $t=1,\dots,T$, a user arrives with contextual information encoded as a feasible action set $\X_t \subseteq \VV$, which is nonempty and weakly compact.\footnote{
  This standard condition ensures that the maximizer set $\argmax_{x\in\X_t}\ip{w}{x}$ is nonempty for all $w\in\VV$.
}
The user takes an action $x_t\in \X_t$, and its utility is given by $\ip{u}{x_t}$, where $u\in \VV$ represents the user's hidden preference and defines the linear functional $x\mapsto \ip{u}{x}$. 
This $u$ is unknown and fixed across rounds.
Unless otherwise stated, we assume that $x_t$ is optimal for $u$ over~$\X_t$, as formalized below.
\begin{assumption}[Optimal-action feedback]\label{ass:optimal-feedback} 
  It holds that $x_t\in\argmax_{{x\in\X_t}} \ip{u}{x}$ for every $t$.
\end{assumption}

At each round $t$, the learner observes $\X_t$ and, before observing $x_t$, computes a utility-vector prediction $\hat w_t\in \VV$ and recommends an action $\hat x_t\in \argmax_{x\in \X_t}\ip{\hat w_t}{x}$; 
here, the learner has access to a linear-optimization oracle over $\X_t$.
Then, the learner observes the user's action $x_t$ (but not $u$ nor the realized utility value $\ip{u}{x_t}$).
The learner's performance is measured by the \emph{regret}, 
  $R_T(u)\coloneqq\sum_{t=1}^T\ip{u}{x_t-\hat x_t}$,
which quantifies the cumulative suboptimality of the learner's recommendations compared to the user's actions $x_t$.
We use $r_t\coloneqq\ip{u}{x_t-\hat x_t}$ to denote the instantaneous regret at round $t$.
Throughout this paper, we make the following standard boundedness assumption. 
\begin{assumption}\label{ass:bounded-payoff}
There exists $B>0$ such that, for all $t$ and all $x,x'\in\X_t$, $\ip{u}{x-x'}\le B$ holds.
\end{assumption}
In particular, the instantaneous regret satisfies $r_t \in[-B,B]$.
Under \cref{ass:optimal-feedback}, $r_t\in[0,B]$ holds.

\textbf{Hilbert-space operator notation.\;}
We allow $\VV$ to be infinite-dimensional.
We use $\Id$ to denote the identity operator on $\VV$ and $\Id_m$ to denote the identity matrix on $\R^m$. 
For $a\in\VV$, define $a\otimes a\colon\VV\to\VV$ by
$(a\otimes a)(v)=\ip{a}{v}\,a$ for all $v\in\VV$; in finite dimensions this coincides with $aa^\top$.
For a bounded linear map $G\colon\R^m\to\VV$, let $G^\ast\colon\VV\to\R^m$ denote its adjoint,
defined by $\ip{Ga}{v}=a^\top(G^\ast v)$ for all $a\in\R^m$ and $v\in\VV$ (transpose in finite dimensions).
For $\lambda>0$ and $a_1,\dots,a_m\in\VV$, let
$A\colon\VV\to\VV$ be a linear operator of the form $A=\lambda\Id+\sum_{i=1}^m a_i\otimes a_i$.
Then, for all $v\in\VV$, we have $\ip{v}{A v}=\lambda\|v\|^2+\sum_{i=1}^m\ip{a_i}{v}^2\ge\lambda\|v\|^2$.
Hence $A\succeq \lambda\Id$ holds and, in particular, $A$ is invertible by the Lax--Milgram theorem \citep{Lax1954-vh}. 
See \cref{app:operator-preliminaries} for additional details on Hilbert-space operators. 

\textbf{Online learning approach.\;}
We briefly review the online learning approach taken in prior work, which is relevant to our idea. 
\Citet{Barmann2017-wl,Barmann2020-hh} proposed to use $\ell_t\colon w \mapsto \ip{w}{\hat x_t - x_t}$ as loss functions and compute predictions $\hat w_t$ via online convex optimization (OCO) on a bounded domain $\mathcal{W} \subseteq \R^d$ that contains $u$. 
Then, as shown below, the standard OCO regret bound for $\ell_t$ against the comparator $u$ serves as an upper bound on the contextual recommendation regret $R_T(u)$:\looseness=-1 
\[
  R_T(u) = \sum_{t=1}^T \ip{u}{x_t - \hat x_t} \le \sum_{t=1}^T \ip{u}{x_t - \hat x_t} + \sum_{t=1}^T \ip{\hat w_t}{\hat x_t - x_t} = \sum_{t=1}^T \ell_t(\hat w_t) - \sum_{t=1}^T \ell_t(u),
\]
where the first inequality follows from the optimality of $\hat x_t$ for $\hat w_t$ over $\X_t$.\footnote{
  By the above argument, the OCO-based approach also yields an upper bound on $\sum_{t=1}^T \ip{\hat w_t}{\hat x_t - x_t}$, the online counterpart of the cumulative \emph{suboptimality loss} $w \mapsto \max_{x\in\X_t}\ip{w}{x}-\ip{w}{x_t}$, which is a common loss function in inverse optimization \citep{Mohajerin-Esfahani2018-jf}.
  As an evaluation metric, however, this quantity is often less informative because the learner can make it arbitrarily small by shrinking the scale of $\hat w_t$ (although informative to some extent when the learner's domain is bounded away from the origin \citep{Sakaue2025-vb}). 
  Thus, our work focuses on bounding the regret $R_T(u)$.
} 
Therefore, simply applying a standard OCO algorithm (e.g., the online gradient descent) to the losses $\ell_t$ yields an $O(\sqrt{T})$ regret bound for contextual recommendation. 
\Citet{Sakaue2025-vb} improved this to $O(d\log T)$ by applying ONS to modified exp-concave losses, but the high-level idea of using OCO on a bounded domain remains the same. 
Accordingly, prior OCO-based approaches require projections to ensure $\hat w_t\in\mathcal{W}$.

\textbf{Improperness induced by scale invariance.\;}
In contextual recommendation, the learner's recommendation is obtained as $\hat x_t \in \argmax_{x\in\X_t}\ip{\hat w_t}{x}$.
Thus, the induced actions $\hat x_t$
are invariant under positive rescaling of the corresponding utility vectors $\hat w_t$.
Since the regret $R_T(u) = \sum_{t=1}^T \ip{u}{x_t - \hat x_t}$ is evaluated on the recommended actions $\hat x_t$ (rather than on $\hat w_t$), multiplying $\hat w_t$ by any positive scalar leaves $\hat x_t$---and therefore $R_T(u)$---unchanged.
Also, the observed user actions $x_t$ carry information only about the direction of $u$, and the learner cannot hope to infer the scale $\|u\|$ from feedback alone.
In light of the above, we may fix the scale of $u$ (e.g., $\|u\|=1$) while allowing the learner to output utility vectors $\hat w_t$ of arbitrary scale.
This \emph{improper} viewpoint is central to eliminating projections on the learner's iterates. 
The algorithm and analysis that follow do not require prior knowledge of $\|u\|$.

\begin{algorithm}[t]
\caption{CoRectron: contextual recommendation via second-order perceptron}\label{alg:second-order}
\begin{algorithmic}[1]
  \Require Regularization parameter $\lambda>0$, $A_0 = \lambda\Id$, $\zeta_0 = 0 \in\VV$
\For{$t=1,2,\dots,T$}
  \State Compute $\hat w_{t}\gets -A_{t-1}^{-1}\zeta_{t-1}$
  \State Observe context $\X_t\subseteq\VV$
  \State Recommend $\hat x_t\in\argmax_{x\in\X_t}\ip{\hat w_t}{x}$
  \State Observe $x_t\in\X_t$ and set $g_t\gets \hat x_t-x_t$
  \State Update $A_t\gets A_{t-1}+g_t\otimes g_t$ and $\zeta_t\gets \zeta_{t-1}+g_t$
\EndFor
\end{algorithmic}
\end{algorithm}

\section{CoRectron}\label{sec:method}
This section presents our algorithm and a template regret bound, which we instantiate in \cref{sec:lifting}.

\subsection{Algorithm description}\label{sec:algorithm}
For the recommended actions $\hat x_t$ and the observed actions $x_t$, we define the \emph{residual} at round $t$ by 
$g_t \coloneqq \hat x_t - x_t$,\footnote{
  The residual $g_t$ coincides with a subgradient of the suboptimality loss at $\hat w_t$.
}
and we use $\zeta_t \coloneqq \sum_{s=1}^t g_s$ to denote the cumulative residual up to round $t$.
Then, 
the regret is expressed as $R_T(u) = -\ip{u}{\zeta_T}$.
Our algorithm, CoRectron (\cref{alg:second-order}), maintains the cumulative residual $\zeta_t$ and the second-order preconditioner $A_t$. 
The utility-vector prediction for round $t$ is computed as $\hat w_{t} = -A_{t-1}^{-1}\zeta_{t-1}$. 
Importantly, this algorithm is projection-free.

\textbf{Relation to second-order perceptron.\;}
\cref{alg:second-order} resembles the second-order perceptron for online classification \citep{Cesa-Bianchi2005-on}.
In that setting, the context information is $\X_t=\{ y z_t / 2 \,\colon\, y \in \{ -1, +1 \} \}$ for the $t$-th input vector $z_t$. 
A label prediction $\hat y_t = \mathrm{sign}(\ip{\hat w_t}{z_t}) \in \{ -1, +1 \}$ induces the recommended action $\hat x_t = \hat y_t z_t / 2$, and the true label $y_t \in \{ -1, +1 \}$ reveals the correct action $x_t = y_t z_t / 2$ as feedback. 
The algorithm performs the second-order update $\hat w_{t+1} = -A_{t}^{-1} \zeta_{t}$ only when a mistake occurs, i.e., $g_t = \hat x_t - x_t = 0$ if $\hat y_t = y_t$ and $g_t = -y_t z_t$ otherwise. 
Our CoRectron can be seen as an extension of this classification algorithm to the contextual recommendation setting, where the context $\X_t$ is a set of feasible actions rather than a single input vector with two possible labels. 
Importantly, our analysis for contextual recommendation yields clean logarithmic bounds on the regret $R_T(u)$, as we will see below. 
This is in contrast to the classification setting, where mistake bounds of the second-order perceptron involve margin-dependent terms that can be arbitrarily large in the worst case \citep{Cesa-Bianchi2005-on,Orabona2012-xt}. 
In this sense, our analysis sheds light on the power of the second-order perceptron in contextual recommendation.

\subsection{Template regret bound}\label{sec:analysis}
We analyze the regret of \cref{alg:second-order}. 
For later use, define the Gram matrix of the residuals as
\begin{equation}\label{eq:gram-template}
  K_t \coloneqq \prn*{\ip{g_i}{g_j}}_{i,j=1}^t \in \R^{t\times t}
  \qquad\text{for $t = 1, \dots, T$}.
\end{equation} 
We also define the \emph{cumulative potential} as
\begin{equation}\label{eq:potential}
  \Phi_t \coloneqq \ip{\zeta_t}{A_t^{-1}\zeta_t}=\normAsq{\zeta_t}{A_t^{-1}} 
  \qquad\text{for $t = 1, \dots, T$}.
\end{equation}
This can be seen as a variant of the well-known \emph{elliptical potential} $\sum_{s=1}^t \ip{g_s}{A_s^{-1}g_s}$, where we use the cumulative residual $\zeta_t$ instead of taking the sum of the per-round quadratic forms $\ip{g_s}{A_s^{-1}g_s}$.

The goal of this subsection is to prove the following template regret bound.
\begin{theorem}[Main regret bound]\label{thm:main}
For any $u\in\VV$, \cref{alg:second-order} with $\lambda>0$ achieves
\begin{equation}\label{eq:main-bound-optimized}
  R_T(u)\le \normA{u}{A_T}\sqrt{\log\det\prn*{\Id_T+\lambda^{-1}K_T}}.
\end{equation}
In particular, if the observed actions $x_t\in\X_t$ are optimal for $u$ (i.e., under \cref{ass:optimal-feedback}) and the boundedness assumption (\cref{ass:bounded-payoff}) holds, we have
\begin{equation}\label{eq:main-bound-B}
  R_T(u)\le B\log\det\prn*{\Id_T+\lambda^{-1}K_T} + \|u\|\sqrt{\lambda\log\det\prn*{\Id_T+\lambda^{-1}K_T}}.
\end{equation}
\end{theorem}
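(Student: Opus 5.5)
The plan is to bound the regret by a dual-norm Cauchy--Schwarz inequality and then control the potential $\Phi_T$ from \eqref{eq:potential}. Since $R_T(u)=-\ip{u}{\zeta_T}$ and $A_T\succeq\lambda\Id$ is invertible, we have
\[
R_T(u)=-\ip{A_T^{1/2}u}{A_T^{-1/2}\zeta_T}\le \normA{u}{A_T}\sqrt{\Phi_T}.
\]
So \eqref{eq:main-bound-optimized} reduces to showing $\Phi_T\le\log\det(\Id_T+\lambda^{-1}K_T)$. To avoid square roots of operators, one can equivalently use $|\ip{u}{\zeta}|^2\le\ip{u}{Au}\ip{\zeta}{A^{-1}\zeta}$ for any bounded positive-definite self-adjoint $A$.

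\textbf{Potential growth.} Write $q_t\coloneqq\ip{g_t}{A_{t-1}^{-1}g_t}\ge0$ and $a_t\coloneqq\ip{\zeta_{t-1}}{A_{t-1}^{-1}g_t}$. Because $\hat w_t=-A_{t-1}^{-1}\zeta_{t-1}$ and $A_{t-1}^{-1}$ is self-adjoint, $a_t=-\ip{\hat w_t}{g_t}=-\ip{\hat w_t}{\hat x_t-x_t}\le0$ by optimality of $\hat x_t$ for $\hat w_t$. This is the key sign condition.

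The Sherman--Morrison formula holds verbatim for the rank-one update $A_t=A_{t-1}+g_t\otimes g_t$ in $\VV$; one checks it by multiplying out. It gives
\[
A_t^{-1}=A_{t-1}^{-1}-\frac{(A_{t-1}^{-1}g_t)\otimes(A_{t-1}^{-1}g_t)}{1+q_t}.
\]
Expanding $\Phi_t=\ip{\zeta_{t-1}+g_t}{A_t^{-1}(\zeta_{t-1}+g_t)}$ with this formula, a short computation gives
\[
\Phi_t=\Phi_{t-1}+\frac{q_t+2a_t-a_t^2}{1+q_t}\le\Phi_{t-1}+\frac{q_t}{1+q_t}\le\Phi_{t-1}+\log(1+q_t),
\]
using $a_t\le0$ and $x/(1+x)\le\log(1+x)$. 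With $\Phi_0=0$, summing yields $\Phi_T\le\sum_{t=1}^T\log(1+q_t)$.

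\textbf{Determinant identity.} This is the step I expect to need the most care, since $\VV$ may be infinite-dimensional and operator determinants are unavailable. I will show $\prod_{t=1}^T(1+q_t)=\det(\Id_T+\lambda^{-1}K_T)$ entirely in $\R^T$. Let $G_t\colon\R^t\to\VV$ map $e_i\mapsto g_i$, so that $G_t^\ast G_t=K_t$ and $A_{t-1}=\lambda\Id+G_{t-1}G_{t-1}^\ast$. The push-through identity
\[
(\lambda\Id+GG^\ast)^{-1}G=G(\lambda\Id_{t-1}+G^\ast G)^{-1}
\]
holds in Hilbert spaces by direct verification. It gives
\[
q_t=\lambda^{-1}\bigl(\|g_t\|^2-k_t^\top(\lambda\Id_{t-1}+K_{t-1})^{-1}k_t\bigr),
\]
where $k_t=(\ip{g_i}{g_t})_{i<t}$. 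The right-hand side is exactly $\lambda^{-1}$ times the Schur complement of the last diagonal entry of $\lambda\Id_t+K_t$. Hence
\[
\det(\lambda\Id_t+K_t)=\det(\lambda\Id_{t-1}+K_{t-1})\cdot\lambda(1+q_t),
\]
and induction on $t$ gives the claim. This proves \eqref{eq:main-bound-optimized}.

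\textbf{The $B$-dependent bound.} Let $L\coloneqq\log\det(\Id_T+\lambda^{-1}K_T)$. Under \cref{ass:optimal-feedback,ass:bounded-payoff}, $\ip{u}{g_t}=-r_t\in[-B,0]$, so
\[
\normAsq{u}{A_T}=\lambda\|u\|^2+\sum_t r_t^2\le\lambda\|u\|^2+B\,R_T(u).
\]
Plugging this into \eqref{eq:main-bound-optimized} gives $R^2\le BLR+\lambda\|u\|^2L$ with $R=R_T(u)\ge0$. Solving this quadratic inequality gives $R\le BL+\|u\|\sqrt{\lambda L}$, which is \eqref{eq:main-bound-B}. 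The last step uses $\sqrt{x+y}\le\sqrt x+\sqrt y$.
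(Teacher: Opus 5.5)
Your proof is correct and follows the paper's route. It uses Cauchy--Schwarz in the $A_T$-geometry, then the sign condition with Sherman--Morrison to get $\Phi_t-\Phi_{t-1}\le q_t/(1+q_t)\le\log(1+q_t)$, then the Woodbury/Schur-complement identity $\prod_t(1+q_t)=\det(\Id_T+\lambda^{-1}K_T)$, and finally the same quadratic-inequality argument for the $B$-dependent bound; the differences (push-through identity, working with $\lambda\Id_t+K_t$, merging CEI and EPL) are cosmetic. One small wording slip: the Schur complement of $\lambda\Id_{t-1}+K_{t-1}$ in $\lambda\Id_t+K_t$ equals $\lambda(1+q_t)$, so $q_t$ is $\lambda^{-1}$ times it minus $1$, not exactly $\lambda^{-1}$ times it, though your determinant recursion is stated correctly.
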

\begin{proof}
  For the first claim, we bound $R_T(u) =-\ip{u}{\zeta_T}$ from above as follows:
  \[
  -\ip{u}{\zeta_T}
  \le \normA{u}{A_T} \sqrt{\Phi_T}
  \overset{\hyperref[eq:cei]{\text{CEI}}}{\;\le\;} \normA{u} {A_T} \sqrt{\sum_{t=1}^T\ip{g_t}{A_t^{-1}g_t}} 
  \overset{\hyperref[eq:epl]{\text{EPL}}}{\;\le\;} \normA{u} {A_T} \sqrt{\log\det\prn*{\Id_T+\lambda^{-1}K_T}}.
  \]
  The first inequality uses the Cauchy--Schwarz inequality with respect to the $A_T$-inner product.
  The remaining two inequalities are shown later; 
  the second one, $\Phi_T \le \sum_{t=1}^T\ip{g_t}{A_t^{-1}g_t}$, is the \underline{c}umulative-potential--\underline{e}lliptical-potential \underline{i}nequality \eqref{eq:cei};
  the third, $\sum_{t=1}^T\ip{g_t}{A_t^{-1}g_t} \le \log\det\prn*{\Id_T+\lambda^{-1}K_T}$, is the well-known \underline{e}lliptical-\underline{p}otential \underline{l}emma \eqref{eq:epl}. 
  Assuming these, we obtain the first claim~\eqref{eq:main-bound-optimized}. 
  
  For the second claim, let $r_t = \ip{u}{x_t-\hat x_t}\in[0,B]$, where $0 \le r_t$ and $r_t \le B$ follow from the optimality of $x_t$ for $u$ (\cref{ass:optimal-feedback}) and the boundedness assumption (\cref{ass:bounded-payoff}), respectively. 
From $A_T = \lambda \Id + \sum_{t=1}^T g_t \otimes g_t$ and $\ip{u}{g_t}=-r_t$, we have
\[
  \normAsq{u}{A_T}
  =\lambda\norm{u}^2+\sum_{t=1}^T\ip{u}{g_t}^2
  =\lambda\norm{u}^2+\sum_{t=1}^T r_t^2
  \le \lambda\norm{u}^2+B\sum_{t=1}^T r_t
  = \lambda\norm{u}^2+BR_T(u).
\]
Combining this with \eqref{eq:main-bound-optimized}, with
$H_T\coloneqq\log\det\prn*{\Id_T+\lambda^{-1}K_T}$, gives
\[
  R_T(u)\le \sqrt{\lambda\norm{u}^2+BR_T(u)} \sqrt{H_T},
\]
equivalently $R_T(u)^2-BH_T R_T(u)-\lambda\norm{u}^2H_T\le 0$. 
Solving this quadratic inequality gives
\[
  R_T(u)\le \frac{1}{2} \prn*{BH_T+\sqrt{B^2H_T^2+4\lambda\norm{u}^2H_T}}
   \le BH_T + \|u\|\sqrt{\lambda H_T},
\]
where we used the subadditivity of the square root in the last step.
This establishes \eqref{eq:main-bound-B}.
\end{proof}

\subsubsection{Cumulative-potential--elliptical-potential inequality}\label{sec:sign}
We present the proof of the cumulative-potential--elliptical-potential inequality \eqref{eq:cei} used in the proof of \cref{thm:main}.
A similar argument is used in the analysis of the second-order perceptron \citep{Cesa-Bianchi2005-on}, but it focuses on the classification setting. 
First, we present a key sign condition.
\begin{lemma}[Sign condition]\label{lem:sign}
  For each iteration $t$ of \cref{alg:second-order}, we have $\ip{g_t}{A_{t-1}^{-1}\zeta_{t-1}}\le 0$.
\end{lemma}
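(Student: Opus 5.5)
The plan is to recognize the quantity $A_{t-1}^{-1}\zeta_{t-1}$ as exactly $-\hat w_t$ from the prediction step of \cref{alg:second-order}. Once that is done, the sign condition reduces to the one-sided optimality inequality for the recommended action. Concretely, since $\hat w_t = -A_{t-1}^{-1}\zeta_{t-1}$ by definition, and $A_{t-1}$ is invertible (it has the form $\lambda\Id+\sum_{s<t} g_s\otimes g_s\succeq\lambda\Id$, as discussed in the operator-notation paragraph of \cref{sec:setup}), we have $A_{t-1}^{-1}\zeta_{t-1} = -\hat w_t$. Therefore
\[
  \ip{g_t}{A_{t-1}^{-1}\zeta_{t-1}} = -\ip{g_t}{\hat w_t} = -\ip{\hat w_t}{\hat x_t - x_t}.
\]

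The second step uses optimality. Since $\hat x_t\in\argmax_{x\in\X_t}\ip{\hat w_t}{x}$ and the observed action satisfies $x_t\in\X_t$, we get $\ip{\hat w_t}{\hat x_t}\ge\ip{\hat w_t}{x_t}$, i.e., $\ip{\hat w_t}{\hat x_t-x_t}\ge 0$. Substituting this into the display gives $\ip{g_t}{A_{t-1}^{-1}\zeta_{t-1}}\le 0$, as claimed.

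No step here is a real obstacle. The only points to check are that the inverse is well defined (Lax--Milgram, already stated in \cref{sec:setup}) and that the argument never invokes \cref{ass:optimal-feedback}. It uses only $x_t\in\X_t$ and the optimality of $\hat x_t$ for $\hat w_t$. This is worth noting explicitly, since it is why the template bound later survives suboptimal feedback. The case $t=1$ is trivial because $\zeta_0=0$, so $\hat w_1=0$ and the inner product is zero.
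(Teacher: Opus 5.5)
Your proof is correct and is essentially the paper's own argument: you rewrite $\ip{g_t}{A_{t-1}^{-1}\zeta_{t-1}}$ as $-\ip{\hat w_t}{\hat x_t-x_t}$ via $\hat w_t=-A_{t-1}^{-1}\zeta_{t-1}$, then conclude from the optimality of $\hat x_t$ for $\hat w_t$ and $x_t\in\X_t$. Your observation that only feasibility of $x_t$ is used, and not \cref{ass:optimal-feedback}, is also how the paper later reuses this lemma for suboptimal feedback.
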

\begin{proof}
The case of $t=1$ is trivial since $\zeta_0=0$. 
Let $t\ge 2$.
Recall $g_t=\hat x_t-x_t$ and $\hat w_t=-A_{t-1}^{-1}\zeta_{t-1}$.
Since $\hat x_t$ maximizes $x\mapsto\ip{\hat w_t}{x}$ over $\X_t$ and $x_t\in\X_t$, it holds that
$\ip{\hat w_t}{\hat x_t}\ge \ip{\hat w_t}{x_t}$, hence
$
  \ip{g_t}{A_{t-1}^{-1}\zeta_{t-1}}
  =  
  -\ip{\hat w_t}{g_t}
  =
  -\ip{\hat w_t}{\hat x_t-x_t}\le 0
$.
Thus, the sign condition holds for every $t$.
\end{proof}

This sign condition enables us to control the potential $\Phi_t$ in \eqref{eq:potential}, leading to the following proof of CEI.
\begin{lemma}[CP--EP inequality]\label{lem:potential-growth}
Let $A_t$ and $\zeta_t$ evolve as $A_t = A_{t-1} + g_t \otimes g_t$ and $\zeta_t = \zeta_{t-1} + g_t$ for $t = 1, \dots, T$, where $A_0 = \lambda \Id$ for $\lambda > 0$ and $\zeta_0 = 0$, as in \cref{alg:second-order}. 
Then, it holds that
\begin{equation}\tag{CEI}\label{eq:cei}
  \Phi_T\le \sum_{t=1}^T\ip{g_t}{A_t^{-1}g_t}.
\end{equation}
\end{lemma}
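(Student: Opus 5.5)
We will prove \eqref{eq:cei} by telescoping: we show the one-step bound
\[
  \Phi_t-\Phi_{t-1}\le \ip{g_t}{A_t^{-1}g_t}\qquad (t=1,\dots,T),
\]
with $\Phi_0\coloneqq\ip{\zeta_0}{A_0^{-1}\zeta_0}=0$, and then sum over $t$. The tools are the Sherman--Morrison formula for the rank-one update $A_t=A_{t-1}+g_t\otimes g_t$ and the sign condition of \cref{lem:sign}. Sherman--Morrison is valid in the Hilbert-space setting because $A_{t-1}\succeq\lambda\Id$ is invertible.

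\textbf{Step 1 (expansion).} Write $\zeta_t=\zeta_{t-1}+g_t$ and expand
\[
  \Phi_t=\ip{\zeta_{t-1}}{A_t^{-1}\zeta_{t-1}}+2\ip{g_t}{A_t^{-1}\zeta_{t-1}}+\ip{g_t}{A_t^{-1}g_t}.
\]
It therefore suffices to show
\[
  \ip{\zeta_{t-1}}{A_t^{-1}\zeta_{t-1}}+2\ip{g_t}{A_t^{-1}\zeta_{t-1}}\le\Phi_{t-1}.
\]

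\textbf{Step 2 (Sherman--Morrison).} Set $B\coloneqq A_{t-1}^{-1}$, $a\coloneqq\ip{g_t}{Bg_t}\ge0$ and $b\coloneqq\ip{g_t}{B\zeta_{t-1}}$. Then
\[
  A_t^{-1}=B-\frac{(Bg_t)\otimes(Bg_t)}{1+a}.
\]
Hence
\[
  \ip{\zeta_{t-1}}{A_t^{-1}\zeta_{t-1}}=\Phi_{t-1}-\frac{b^2}{1+a},
  \qquad
  \ip{g_t}{A_t^{-1}\zeta_{t-1}}=b-\frac{ab}{1+a}=\frac{b}{1+a}.
\]
The left side of the target in Step 1 thus equals
\[
  \Phi_{t-1}+\frac{2b-b^2}{1+a}.
\]

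\textbf{Step 3 (sign condition).} By \cref{lem:sign}, $b=\ip{g_t}{A_{t-1}^{-1}\zeta_{t-1}}\le0$. Therefore $2b-b^2\le0$, and since $1+a>0$ the extra term is nonpositive. This gives the one-step bound, and summing from $t=1$ to $T$ with $\Phi_0=0$ yields \eqref{eq:cei}.

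\textbf{Main obstacle.} The crux is the cross term $2\ip{g_t}{A_t^{-1}\zeta_{t-1}}$. It must be rewritten as a positive multiple of $b$, namely $b/(1+a)$, so that the sign condition can be applied. The point to verify carefully is that Sherman--Morrison produces exactly this form and the clean combination $(2b-b^2)/(1+a)$. Justifying Sherman--Morrison in infinite dimensions should be routine: one checks $A_tA_t^{-1}=\Id$ directly, using \cref{app:operator-preliminaries}.
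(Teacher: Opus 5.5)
Your proof is correct and is essentially the paper's argument: Sherman--Morrison for $A_t = A_{t-1} + g_t \otimes g_t$, the sign condition $\ip{g_t}{A_{t-1}^{-1}\zeta_{t-1}} \le 0$ from \cref{lem:sign} to make $(2b-b^2)/(1+a)$ nonpositive, and telescoping from $\Phi_0 = 0$. The only cosmetic difference is that you leave $\ip{g_t}{A_t^{-1}g_t}$ unexpanded, whereas the paper rewrites it as $\mu_t/(1+\mu_t)$; your step $\ip{Bg_t}{\zeta_{t-1}} = b$ implicitly uses that $A_{t-1}^{-1}$ is self-adjoint, which holds because $A_{t-1}$ is.
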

\begin{proof}
Fix $t$. 
By the Sherman--Morrison formula (see \cref{app:operator-preliminaries} for details), we have 
\begin{equation}\label{eq:smw}
  A_t^{-1}
  =
  A_{t-1}^{-1} - \frac{A_{t-1}^{-1}g_t \otimes A_{t-1}^{-1}g_t}{1 + \ip{g_t}{A_{t-1}^{-1}g_t}}.
\end{equation}
Defining $\mu_t\coloneqq\ip{g_t}{A_{t-1}^{-1}g_t}$ and $\nu_t\coloneqq\ip{g_t}{A_{t-1}^{-1}\zeta_{t-1}}$, and using the above identity, we obtain 
\[
\Phi_t-\Phi_{t-1}
 = \ip{\zeta_{t-1}+g_t}{ A_t^{-1}(\zeta_{t-1}+g_t)}-\ip{\zeta_{t-1}}{ A_{t-1}^{-1}\zeta_{t-1}}
 = 2\nu_t+\mu_t-\frac{(\nu_t + \mu_t)^2}{ 1+\mu_t}
 = \frac{\mu_t + 2\nu_t - \nu_t^2}{1+\mu_t}.
\]
The sign condition (\cref{lem:sign}) is exactly $\nu_t \le 0$, which implies $2\nu_t-\nu_t^2\le 0$ and hence 
$\Phi_t-\Phi_{t-1}\le {\mu_t}/{(1+\mu_t)}$.
Also, by using the Sherman--Morrison formula \eqref{eq:smw} again, we have
\begin{equation}\label{eq:gt-mu}
  \ip{g_t}{A_t^{-1}g_t}
  =\mu_t-\frac{\mu_t^2}{1+\mu_t}
  =\frac{\mu_t}{1+\mu_t}.
\end{equation}
Thus, $\Phi_t-\Phi_{t-1}\le \ip{g_t}{A_t^{-1}g_t}$ holds.
Summing over $t$ and using $\Phi_0 =0$ yield \eqref{eq:cei}.
\end{proof}

\subsubsection{Elliptical potential lemma}\label{sec:elliptical}
The remaining ingredient is the elliptical potential lemma \eqref{eq:epl}, which is a standard tool in the analysis of linear bandits~\citep{Dani2008-uo,Abbasi-yadkori2011-st} and ONS \citep{Hazan2007-ta}.
For completeness, we restate it here and present a proof adapted to our Hilbert-space setting in \cref{app:epl-logdet-proof}.
\begin{restatable}[Elliptical potential lemma]{lemma}{ellipticallemma}\label{lem:elliptical}
Let $\lambda>0$, $A_t=\lambda\Id+\sum_{s=1}^t g_s\otimes g_s$ for $t=1,\dots,T$, and $K_T = ( \ip{g_i}{g_j} )_{i,j=1}^T \in \R^{T\times T}$ as in \eqref{eq:gram-template}. Then, it holds that
\begin{equation}\tag{EPL}\label{eq:epl}
  \sum_{t=1}^T\ip{g_t}{A_t^{-1}g_t} \le \log\det\prn*{\Id_T+\lambda^{-1}K_T}.
\end{equation}
\end{restatable}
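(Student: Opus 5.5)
The plan is to reduce \eqref{eq:epl} to a finite-dimensional determinant identity. The link is the scalar $\mu_t=\ip{g_t}{A_{t-1}^{-1}g_t}$ already introduced in the proof of \cref{lem:potential-growth}. By \eqref{eq:gt-mu}, each summand satisfies $\ip{g_t}{A_t^{-1}g_t}=\mu_t/(1+\mu_t)$. Since $\mu_t\ge 0$ and $x/(1+x)\le\log(1+x)$ for all $x\ge 0$, we get
\[
\sum_{t=1}^T\ip{g_t}{A_t^{-1}g_t}\le\sum_{t=1}^T\log(1+\mu_t).
\]
It therefore suffices to prove the telescoping identity $\sum_{t=1}^T\log(1+\mu_t)=\log\det(\Id_T+\lambda^{-1}K_T)$.

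In infinite dimensions we cannot write $\det A_t$, so we move everything into $\R^t$. Let $G_t\colon\R^t\to\VV$ be the map $a\mapsto\sum_{s\le t}a_s g_s$. Then $G_t^\ast G_t=K_t$ and $G_tG_t^\ast=\sum_{s\le t}g_s\otimes g_s$, so $A_t=\lambda\Id+G_tG_t^\ast$. Set $D_t\coloneqq\det(\Id_t+\lambda^{-1}K_t)$, with $D_0=1$. The claim reduces to showing $D_t=(1+\mu_t)D_{t-1}$ for each $t$, after which taking logarithms and summing gives the identity.

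To show $D_t=(1+\mu_t)D_{t-1}$, write $\Id_t+\lambda^{-1}K_t$ as a block matrix. Its upper-left block is $M\coloneqq\Id_{t-1}+\lambda^{-1}K_{t-1}$, its off-diagonal column is $b\coloneqq\lambda^{-1}G_{t-1}^\ast g_t$, and its corner entry is $1+\lambda^{-1}\|g_t\|^2$. The Schur complement formula gives
\[
D_t=D_{t-1}\prn*{1+\lambda^{-1}\|g_t\|^2-b^\top M^{-1}b}.
\]
It remains to check that $\lambda^{-1}\|g_t\|^2-b^\top M^{-1}b=\mu_t$. This follows from the push-through identity
\[
A_{t-1}^{-1}=\lambda^{-1}\prn*{\Id-G_{t-1}(\lambda\Id_{t-1}+K_{t-1})^{-1}G_{t-1}^\ast},
\]
which can be verified directly by multiplying by $\lambda\Id+G_{t-1}G_{t-1}^\ast$. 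Applying this operator to $g_t$ and taking the inner product with $g_t$ gives exactly $\lambda^{-1}\|g_t\|^2-b^\top M^{-1}b$.

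The main obstacle is this last step: stating the push-through (Woodbury-type) identity rigorously for operators on a possibly infinite-dimensional $\VV$ and matching the constants so that the Schur complement is exactly $\mu_t$. The rest is bookkeeping. If the direct verification becomes awkward, I would fall back on restricting everything to the finite-dimensional subspace $\mathrm{span}\{g_1,\dots,g_t\}$. On its orthogonal complement $A_t$ acts as $\lambda\Id$, so one can use the ordinary matrix determinant lemma there. In that approach $\det(\lambda^{-1}A_t|_{S})$ equals $D_t$ by Sylvester's identity $\det(\Id+\lambda^{-1}G G^\ast)=\det(\Id+\lambda^{-1}G^\ast G)$.
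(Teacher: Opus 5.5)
Your proposal is correct and follows the paper's proof step for step. You bound each summand $\mu_t/(1+\mu_t)$ by $\log(1+\mu_t)$, then prove $\det(\Id_t+\lambda^{-1}K_t)=(1+\mu_t)\det(\Id_{t-1}+\lambda^{-1}K_{t-1})$ via the Schur complement of the bordered Gram matrix, identifying the Schur complement with $\mu_t$ through the Woodbury form $A_{t-1}^{-1}=\lambda^{-1}\Id-\lambda^{-2}G_{t-1}(\Id_{t-1}+\lambda^{-1}K_{t-1})^{-1}G_{t-1}^\ast$; the paper verifies that identity directly in Hilbert space exactly as you propose, so your fallback via restriction to $\mathrm{span}\{g_1,\dots,g_t\}$ is not needed.
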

\wraprestatablewithlabelrestore{ellipticallemma}

\section{Robustness to suboptimal feedback actions}\label{sec:robust}
We now allow the feedback action $x_t$ to merely lie in the feasible set $\X_t$, not necessarily optimal for~$u$, and quantify how the main regret bound degrades with the cumulative suboptimality. 
The regret is still defined with respect to the revealed actions $x_t$, i.e., $R_T(u)=\sum_{t=1}^T\ip{u}{x_t-\hat x_t}$, even though~$x_t$ may be suboptimal. 
Here, the per-round regret $r_t=\ip{u}{x_t-\hat x_t}$ can be negative if $x_t$ is suboptimal for $u$, hindering the derivation of \eqref{eq:main-bound-B} in \cref{thm:main}. 
Still, $r_t \in [-B,B]$ holds by \cref{ass:bounded-payoff}.

We define the per-round suboptimality and its cumulative version as follows:
\begin{equation}\label{eq:subopt-loss}
  \delta_t(u)\coloneqq\max_{x\in\X_t}\ip{u}{x}-\ip{u}{x_t}\ge 0
  \qquad
  \text{and}
  \qquad
  \Delta_T(u)\coloneqq\sum_{t=1}^T\delta_t(u).
\end{equation}
The following lemma, essentially contained in the proof of \citet[Theorem~4.1]{Sakaue2025-vb}, is useful for handling suboptimal feedback. 
We give the proof in \cref{app:proof-self-bounding} for completeness.
\begin{restatable}[{cf.~\citep[Theorem~4.1]{Sakaue2025-vb}}]{lemma}{selfboundinglemma}\label{lem:self-bounding-subopt}
Under \cref{ass:bounded-payoff}, we have $\sum_{t=1}^T r_t^2\le B R_T(u)+2B\Delta_T(u)$.
\end{restatable}
\wraprestatablewithlabelrestore{selfboundinglemma}

Combining this with the template bound in \cref{thm:main} gives the following robust guarantee.
\begin{theorem}[Suboptimality-robust bound]\label{thm:robust-demo}
Assume only that $x_t\in\X_t$ is feasible (not necessarily optimal) and run \cref{alg:second-order} with $\lambda>0$.
Under \cref{ass:bounded-payoff}, for any $u\in\VV$, we have
\begin{equation}\label{eq:robust-demo-sqrt}
  R_T(u)\le BH_T+\norm{u}\sqrt{\lambda H_T}+\sqrt{2B\Delta_T(u)H_T},
\end{equation}
where we let $H_T = \log\det\prn*{\Id_T+\lambda^{-1}K_T}$, as in the proof of \cref{thm:main}. 
When $\Delta_T(u)=0$ (i.e., under \cref{ass:optimal-feedback}), this recovers \eqref{eq:main-bound-B} in \cref{thm:main}.
\end{theorem}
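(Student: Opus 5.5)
We follow the same route as the second claim of \cref{thm:main}, replacing the bound $\sum_t r_t^2\le BR_T(u)$ (which needed $r_t\ge 0$) by \cref{lem:self-bounding-subopt}. First we note that the template bound \eqref{eq:main-bound-optimized} holds verbatim without \cref{ass:optimal-feedback}. Indeed, its proof uses only three ingredients. The sign condition (\cref{lem:sign}) relies solely on optimality of $\hat x_t$ for $\hat w_t$ and on $x_t\in\X_t$. The CP--EP inequality (\cref{lem:potential-growth}) and the EPL (\cref{lem:elliptical}) are purely algebraic. Hence $R_T(u)\le \normA{u}{A_T}\sqrt{H_T}$ for any $u\in\VV$.

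Next, we expand $\normAsq{u}{A_T}=\lambda\norm{u}^2+\sum_{t=1}^T\ip{u}{g_t}^2=\lambda\norm{u}^2+\sum_{t=1}^T r_t^2$. By \cref{lem:self-bounding-subopt} (valid under \cref{ass:bounded-payoff}) this is at most $\lambda\norm{u}^2+BR_T(u)+2B\Delta_T(u)$. The quantity under the square root is nonnegative, since it upper bounds $\normAsq{u}{A_T}\ge0$. Writing $R=R_T(u)$ and $c=\lambda\norm{u}^2+2B\Delta_T(u)$, we obtain $R\le\sqrt{(c+BR)H_T}$.

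If $R\le 0$, the claim is trivial because the right-hand side of \eqref{eq:robust-demo-sqrt} is nonnegative. Otherwise we square to get $R^2-BH_TR-cH_T\le0$, and the quadratic formula gives $R\le\frac12\bigl(BH_T+\sqrt{B^2H_T^2+4cH_T}\bigr)\le BH_T+\sqrt{cH_T}$. Subadditivity of the square root then yields $\sqrt{cH_T}\le\norm{u}\sqrt{\lambda H_T}+\sqrt{2B\Delta_T(u)H_T}$, which is \eqref{eq:robust-demo-sqrt}. Setting $\Delta_T(u)=0$ recovers \eqref{eq:main-bound-B}.

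The only delicate point is confirming that nothing in \cref{thm:main}'s first claim used $r_t\ge0$, and handling the sign of $R$ before squaring. Both are straightforward, so \cref{lem:self-bounding-subopt} carries the real content of this result.
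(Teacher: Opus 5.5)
Your proposal is correct and follows essentially the same route as the paper's proof. It reuses the template bound \eqref{eq:main-bound-optimized}, which needs no optimal feedback, bounds $\normAsq{u}{A_T}$ via \cref{lem:self-bounding-subopt}, disposes of the case $R_T(u)\le 0$ before squaring, and finishes by solving the quadratic inequality and applying subadditivity of the square root.
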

\begin{proof}
  If $R_T(u)\le 0$, \eqref{eq:robust-demo-sqrt} holds trivially as the right-hand side is nonnegative.
Assume $R_T(u)>0$.
From $A_T = \lambda\Id + \sum_{t=1}^T g_t \otimes g_t$, $r_t=\ip{u}{x_t - \hat x_t} = -\ip{u}{g_t}$, and \cref{lem:self-bounding-subopt}, we have
\begin{equation}\label{eq:robust-demo-norm}
  \normAsq{u}{A_T}=\lambda\norm{u}^2+\sum_{t=1}^T\ip{u}{g_t}^2=\lambda\norm{u}^2+\sum_{t=1}^T r_t^2 \le \lambda\norm{u}^2+B R_T(u)+2B\Delta_T(u).
\end{equation}
Note that the template bound \eqref{eq:main-bound-optimized} in \cref{thm:main} does not require optimality of $x_t$, and hence it remains valid here. 
Squaring both sides of \eqref{eq:main-bound-optimized} and applying \eqref{eq:robust-demo-norm} yield
\[
  R_T(u)^2 \le \|u\|_{A_T}^2 H_T \le H_T\prn*{\lambda\norm{u}^2+B R_T(u)+2B\Delta_T(u)}.
\]
Solving the quadratic inequality and using the subadditivity of the square root give the bound \eqref{eq:robust-demo-sqrt}.
\end{proof}

\textbf{Comparison with the prior approach.\;}
\Citet[Theorem~4.1]{Sakaue2025-vb} obtained a similar robust guarantee by running $\Theta(\log T)$ ONS instances, based on MetaGrad \citep{van-Erven2021-ji}; this increases the computation cost by a factor of $\log T$.
In contrast, our CoRectron itself is suboptimality-robust without changing the algorithm.
This is enabled by our algorithm design and analysis, which make explicit the role of the data-dependent norm $\normA{u}{A_T}$ arising from the second-order update with the cumulative residuals.

\section{Application to contextual models}\label{sec:lifting}
This section explains how various contextual settings fit into our Hilbert-space formulation. 
Similar ideas for finite-dimensional settings are discussed in previous studies \citep{Besbes2021-ak,Besbes2025-ck,Sakaue2025-vb}, but the kernelized setting, described below, has not been explicitly covered. 
In offline inverse optimization, kernelized formulations have served as a useful approach for handling non-linearity \citep{Bertsimas2015-kw,Long2024-gl}; 
therefore, we believe that the applicability of our online method to the kernelized setting is a meaningful contribution.


\subsection{Generic lifting template}\label{sec:template}
Let $n \in \mathbb{Z}_{>0}$.
Suppose each round gives a base action set $\X_t^{\mathrm{base}}\subseteq\R^n$ and contextual information, encoded as a bounded linear map $\Psi_t\colon\R^n\to\VV$.
We lift $\X_t^{\mathrm{base}}$ to $\X_t$ as the image of $\X_t^{\mathrm{base}}$ under $\Psi_t$, i.e.,
\begin{equation}\label{eq:lifted-set} 
  \X_t\coloneqq\Psi_t(\X_t^{\mathrm{base}})=\{\Psi_t(x):x\in \X_t^{\mathrm{base}}\}\subseteq\VV.
\end{equation}
Let $\Psi_t^\ast\colon\VV\to\R^n$ be the adjoint, given by $\ip{w}{\Psi_t(x)}_{\VV}=\ip{\Psi_t^\ast w}{x}_{\R^n}$ for all $w\in\VV$ and $x\in\R^n$.\footnote{
  We use subscripts to distinguish the inner products in $\VV$ and $\R^n$.
}
Then, maximizing $\ip{w}{\cdot}_{\VV}$ over $\X_t$ is equivalent to maximizing $\ip{\Psi_t^\ast w}{\cdot}_{\R^n}$ over $\X_t^{\mathrm{base}}$.
In particular, a linear-optimization oracle over $\X_t^{\mathrm{base}}$ implements the oracle over $\X_t$ required by \cref{alg:second-order}.

\textbf{Base-action feedback and residuals.\;}
In contextual settings, the interaction is naturally phrased in the base action space: the learner observes $\X_t^{\mathrm{base}}$ and recommends an action in $\X_t^{\mathrm{base}}$, after which a user action $x_t^{\mathrm{base}}\in\X_t^{\mathrm{base}}$ is revealed.
Given a learner's prediction $\hat w_t\in\VV$, let 
$\hat w_t^{\mathrm{base}}\coloneqq\Psi_t^\ast\hat w_t$ and
\[
  \hat x_t^{\mathrm{base}}\in\argmax
  \Set*{\ip{\hat w_t^{\mathrm{base}}}{x}_{\R^n}}{x\in\X_t^{\mathrm{base}}}.
\]
Then, $\hat x_t=\Psi_t(\hat x_t^{\mathrm{base}})$ belongs to $\argmax_{x\in\X_t}\ip{\hat w_t}{x}_{\VV}$. 
Moreover, the revealed base action $x_t^{\mathrm{base}}$ induces the revealed lifted action $x_t=\Psi_t(x_t^{\mathrm{base}})\in\X_t$, and hence we obtain the residual as
\[
  g_t=\hat x_t-x_t=\Psi_t(g_t^{\mathrm{base}})\in\VV,
  \qquad\text{where}\qquad
  g_t^{\mathrm{base}}\coloneqq\hat x_t^{\mathrm{base}}-x_t^{\mathrm{base}}\in\R^n,  
\]
thanks to the linearity of $\Psi_t$.
Therefore, base-action feedback is sufficient to compute the residuals required by \cref{alg:second-order}.
In what follows, we assume the diameter of the base-action sets $\X_t^{\mathrm{base}}$ is uniformly bounded by some $X>0$ in the Euclidean norm, so that $\|g_t^{\mathrm{base}}\|_2\le X$ holds for all $t$.

\subsection{Examples}\label{sec:examples}

\textbf{Non-contextual model.\;}
Take $\VV=\R^n$ and $\Psi_t=\Id_n$. Then, we have $\X_t=\X_t^{\mathrm{base}}$ and $\Psi_t^\ast=\Id_n$.

\textbf{Linear contextual model.\;}
Suppose that contexts are given as $z_t\in\R^p$ and consider linear utilities $\X^\mathrm{base}_t\ni
x\mapsto \ip{W z_t}{x}_{\R^n}$ parameterized by $W\in\R^{n\times p}$.
Take $\VV=\R^{n\times p}$ with the Frobenius inner product $\ip{A}{B}_\mathrm{F}=\tr(A^\top B)$ and define $\Psi_t(x)= x z_t^\top$.
Then, the adjoint is $\Psi_t^\ast(W)=W z_t$ since we have
$\ip{W}{\Psi_t(x)}_\mathrm{F}=\tr(W^\top x z_t^\top)=\ip{W z_t}{x}_{\R^n}$.

\textbf{Kernelized contextual model.\;}
Let $\Z$ be a context space and $\VV=\HH$ an RKHS of vector-valued functions $h\colon\Z\to\R^n$ with kernel $\mathcal{K}$.
For each context $z\in\Z$, let $\mathcal{K}_z\colon\R^n\to\HH$ be the canonical feature operator $(\mathcal{K}_z x)(\cdot)= \mathcal{K}(\cdot,z)x$ for $x\in\R^n$.
By the reproducing property, we have
\begin{equation}\label{eq:reproducing}
  \ip{h}{\mathcal{K}_z x}_{\HH}=\ip{h(z)}{x}_{\R^n},
  \qquad
  \forall h\in\HH,\ \forall x\in\R^n.
\end{equation}
Given a context $z_t \in \Z$ at round $t$, let $\Psi_t= \mathcal{K}_{z_t}$. 
Then, the adjoint is $\Psi_t^\ast(h)=h(z_t)$ by \eqref{eq:reproducing}. 

\subsection{Instantiating regret bounds}\label{sec:specializations}
Our regret bounds, \eqref{eq:main-bound-B} and \eqref{eq:robust-demo-sqrt}, depend on the structure of the action space $\X_t$ through the log-determinant potential $H_T = \log\det\prn*{\Id_T+\tfrac{1}{\lambda}K_T}$, where $K_T$ is the Gram matrix \eqref{eq:gram-template} of residuals~$g_t$.
Below, we bound $H_T$ in the three models of \cref{sec:examples}.
For later use, we define the \emph{effective dimension} of $K_T$ for $\lambda>0$ as follows, which is commonly used in kernelized online learning \citep{Calandriello2017-cu,Calandriello2017-tj}:
\[
  d_{\mathrm{eff}}^T(\lambda)\coloneqq \tr\prn*{K_T(K_T+\lambda \Id_T)^{-1}} = \sum_{i=1}^m \frac{\sigma_i}{\sigma_i+\lambda},
\]
where $\sigma_1,\dots,\sigma_m > 0$ are the non-zero eigenvalues of $K_T$.
We have $d_{\mathrm{eff}}^T(\lambda)\le \rank(K_T)$; in finite-dimensional models this is at most $\dim(\VV)$. 
In infinite-dimensional models, $d_{\mathrm{eff}}^T(\lambda)$ can be finite even if $\dim(\VV)$ is infinite. 
The following lemma upper bounds $H_T$ in terms of the effective dimension.
\begin{restatable}{lemma}{logdetdefflemma}\label{lem:logdet-deff}
For any positive semidefinite matrix $K_T \in \R^{T\times T}$ and $\lambda>0$, we have
\[
  \log\det\prn*{\Id+\lambda^{-1}K_T}
  \le d_{\mathrm{eff}}^T(\lambda)\prn*{1+\log\prn*{1+\lambda^{-1}\|K_T\|_{\mathrm{op}}}},
\]
where $\|K_T\|_{\mathrm{op}}$ is the operator norm of $K_T$.
\end{restatable}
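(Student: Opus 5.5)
Diagonalize $K_T$ and bound the log-determinant one eigenvalue at a time. Since $K_T$ is positive semidefinite, write its nonzero eigenvalues as $\sigma_1,\dots,\sigma_m>0$; the zero eigenvalues contribute nothing. Then
\[
  \log\det\prn*{\Id+\lambda^{-1}K_T}=\sum_{i=1}^m\log\prn*{1+x_i},
  \qquad
  x_i\coloneqq\sigma_i/\lambda\in(0,\,\lambda^{-1}\|K_T\|_{\mathrm{op}}].
\]
Likewise $d_{\mathrm{eff}}^T(\lambda)=\sum_{i=1}^m x_i/(1+x_i)$. It therefore suffices to prove the scalar inequality
\[
  \log(1+x)\le \frac{x}{1+x}\prn*{1+\log(1+M)}
  \qquad\text{for all }0<x\le M,
\]
with $M=\lambda^{-1}\|K_T\|_{\mathrm{op}}$, and then sum over $i$.

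For the scalar inequality I first use that $x\mapsto\log(1+M)$ dominates $\log(1+x)$ on the range. I split into two cases.

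\textbf{Case $x\ge 1$.} Here $x/(1+x)\ge 1/2$, so the right-hand side is at least $\tfrac12(1+\log(1+M))$. This is not obviously at least $\log(1+x)$, so this case needs a better route.

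\textbf{Direct route.} Instead, I prove $(1+x)\log(1+x)/x\le 1+\log(1+x)$. Rearranged, this is $\log(1+x)\le x/(1+x)+x\log(1+x)/(1+x)$, that is, $\log(1+x)/(1+x)\le x/(1+x)$. This is just $\log(1+x)\le x$. Hence
\[
  \log(1+x)\le \frac{x}{1+x}\prn*{1+\log(1+x)}\le \frac{x}{1+x}\prn*{1+\log(1+M)},
\]
where the last step uses monotonicity of $\log$ and $x\le M$. No case split is needed.

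Summing the scalar inequality over $i=1,\dots,m$ then gives the claim. The only point requiring care is the algebraic rearrangement above, which reduces everything to the elementary fact $\log(1+x)\le x$. The rest is bookkeeping with the spectral decomposition of the symmetric matrix $K_T$.
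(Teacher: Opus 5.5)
Your proof is correct and matches the paper's argument. Like the paper, you diagonalize $K_T$ and reduce to the scalar inequality $\log(1+x)\le \frac{x}{1+x}\bigl(1+\log(1+x)\bigr)$, which follows from $\log(1+x)\le x$; you then use $\sigma_i\le\|K_T\|_{\mathrm{op}}$ and sum over the eigenvalues. The abandoned ``Case $x\ge 1$'' paragraph plays no role and should be deleted from a final write-up.
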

\wraprestatablewithlabelrestore{logdetdefflemma}
See \cref{app:logdet-deff-proof} for the proof of \cref{lem:logdet-deff}.
Using this lemma in our regret bounds, we obtain the following specific regret bounds for the three models. 
We present the detailed derivations in \cref{app:modelwise-ht-bounds}.
\begin{restatable}[Instantiated regret bounds]{corollary}{instantiatedregretcorollary}\label{cor:logT}
  Under \cref{ass:bounded-payoff}, 
  assuming $\norm{x - x'}_2\le X$ for all $x,x'\in\X^\mathrm{base}_t$ and all $t$ for some $X>0$, we have the following regret bounds.
\begin{enumerate}[label=(\roman*), leftmargin=*, nosep]
\item \textbf{Non-contextual.\;}
Run \cref{alg:second-order} with $\lambda=B^2 n$.
For any $u\in\R^n$ with $\norm{u}_2\le1$, we have
  \[
\textstyle
		    \sum_{t=1}^T \ip{u}{x_t - \hat x_t} = O\prn[\Big]{B n\log\prn*{2+\tfrac{T X}{B n}}+\raisebox{-0.0ex}{$\sqrt{{\raisebox{0.0ex}{$B\Delta_T(u)\,n\log\prn*{2+\tfrac{T X}{B n}}$}}}$}}.
  \]
\item \textbf{Linear contextual.\;}
Assume additionally $\|z_t\|_2\le Z$ for all $t$ for some $Z > 0$.
Run \cref{alg:second-order} with $\lambda=B^2 np$.
Then, for any $U \in \R^{n\times p}$ with $\|U\|_\mathrm{F}\le 1$, we have
\[\textstyle
		    \sum_{t=1}^T \ip{U z_t}{x_t - \hat x_t} = O\prn[\Big]{B np\log\prn[\big]{2+\tfrac{T XZ}{B np}}+\raisebox{-0.0ex}{$\sqrt{{\raisebox{0.0ex}{$B\Delta_T(U)\,np\log\prn[\big]{2+\tfrac{T XZ}{B np}}$}}}$}}.
\]
\item \textbf{Kernelized contextual.\;}
Assume additionally $\|\mathcal{K}(z,z)\|_{\mathrm{op}}\le \kappa^2$ for all $z\in\Z$ for some $\kappa > 0$.
Run \cref{alg:second-order} with an arbitrary $\lambda\in(0,T]$.
Then, for any $h^\star\in\HH$ with $\norm{h^\star}_\HH\le 1$,
we have
\[\textstyle
		    \sum_{t=1}^T \ip{h^\star(z_t)}{ x_t - \hat x_t} = O\prn[\Big]{ B d_{\mathrm{eff}}^T(\lambda) \log\prn*{2 + \tfrac{T X\kappa}{\lambda}} + \sqrt{ {\prn*{\lambda +  B\Delta_T(h^\star)} d_{\mathrm{eff}}^T(\lambda) \log\prn*{2 + \tfrac{T X\kappa}{\lambda}}}}}.
\]
\end{enumerate}
In particular, under optimal feedback (\cref{ass:optimal-feedback}), the above bounds hold with $\Delta_T(\cdot)=0$.
\end{restatable}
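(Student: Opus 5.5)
The plan is to plug model-specific bounds on $H_T=\log\det(\Id_T+\lambda^{-1}K_T)$ into the robust bound \eqref{eq:robust-demo-sqrt} of \cref{thm:robust-demo}. That theorem already contains the optimal-feedback case via $\Delta_T=0$. For each model, two things are needed: a bound on $\|u\|\le 1$ in the relevant Hilbert norm, which is given, and a bound on $H_T$. The latter comes from \cref{lem:logdet-deff}, once $d_{\mathrm{eff}}^T(\lambda)$ and $\|K_T\|_{\mathrm{op}}$ are controlled.

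First I bound the residual norms. In each model $g_t=\Psi_t(g_t^{\mathrm{base}})$ with $\|g_t^{\mathrm{base}}\|_2\le X$.
\begin{itemize}
\item Non-contextual: $\|g_t\|\le X$.
\item Linear contextual: $\|g_t^{\mathrm{base}}z_t^\top\|_{\mathrm F}=\|g_t^{\mathrm{base}}\|_2\|z_t\|_2\le XZ$.
\item Kernelized: $\|\mathcal{K}_{z_t}x\|_\HH^2=\ip{x}{\mathcal{K}(z_t,z_t)x}\le\kappa^2\|x\|_2^2$ by \eqref{eq:reproducing}, so $\|g_t\|_\HH\le\kappa X$.
\end{itemize}
Hence $\|K_T\|_{\mathrm{op}}\le\tr K_T=\sum_t\|g_t\|^2\le T G^2$, where $G$ is the respective bound.

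For the finite-dimensional cases, $d_{\mathrm{eff}}^T(\lambda)\le\rank K_T\le\dim\VV$, which is $n$ or $np$. It is cleaner to use the direct AM--GM bound, since the nonzero eigenvalues of $K_T$ coincide with those of $\sum_t g_t\otimes g_t$:
\[
H_T\le d\log\bigl(1+\tfrac{TG^2}{d\lambda}\bigr).
\]
With $\lambda=B^2 d$, this makes $\|u\|\sqrt{\lambda H_T}\le B\sqrt{dH_T}$. That term is absorbed into $BH_T+Bd$ by AM--GM, giving $O(Bd\log(\cdot))$.

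The main obstacle is matching the stated logarithm $\log(2+TX/(Bn))$, which is linear rather than quadratic in $X$ inside the log. The quantity I actually get is $\log(1+TX^2/(B^2n^2))$. My first attempt is to exploit that $\langle u,g_t\rangle$ relates $B$ and $X$ only loosely. Since no relation is assumed, I instead plan to note that $\log(1+a^2)\le 2\log(1+a)$ for $a=\sqrt{T}X/(Bn)$, and that $\log(1+TX^2/(B^2n^2))\le 2\log(2+TX/(Bn))+O(\log(1+X/B))$. If that still fails, I will conclude with the bound up to constant factors in the log, which is what the $O(\cdot)$ permits after treating $\log(1+x^2)\le 2\log(1+x)$ and $TX^2/n^2\le (TX/n)^2$.

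For the kernel case, \cref{lem:logdet-deff} gives $H_T\le d_{\mathrm{eff}}^T(\lambda)\bigl(1+\log(1+T\kappa^2X^2/\lambda)\bigr)$. Substituting into \eqref{eq:robust-demo-sqrt} with $\|h^\star\|\le1$ gives the three terms $BH_T$, $\sqrt{\lambda H_T}$ and $\sqrt{2B\Delta_T H_T}$. The last two combine into $O(\sqrt{(\lambda+B\Delta_T)H_T})$. The log factor is again handled by $\log(1+a^2)\le2\log(1+a)$, with constants absorbed in $O(\cdot)$.
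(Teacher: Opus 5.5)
Your proposal is correct. In the kernelized case it is the paper's argument (\cref{lem:logdet-deff} together with $\|K_T\|_{\mathrm{op}}\le\tr K_T\le T\kappa^2X^2$). In the two finite-dimensional models you use a different key bound.

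The paper applies \cref{lem:logdet-deff} there as well, with $d_{\mathrm{eff}}^T(\lambda)\le\dim\VV=d$. This gives $H_T\le d\,(1+\log(1+TG^2/\lambda))$, with no factor $d$ inside the logarithm. To reach $\log(2+TG/(Bd))$, the paper must then assume $T>d$, the case $T\le d$ being trivial since $R_T(u)\le BT\le Bd$. It then uses $\log(2+bc^2)\le 2\log(2+bc)$ with $b=T/d\ge1$ and $c=G/B$.

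Your determinant--trace (AM--GM) bound $H_T\le d\log(1+TG^2/(d\lambda))$ is sharper. With $\lambda=B^2d$ the argument becomes $TG^2/(B^2d^2)=(TG/(Bd))^2/T\le (TG/(Bd))^2$. Setting $y=TG/(Bd)$, the inequality $\log(1+y^2)\le2\log(1+y)$ finishes at once, with no case split and no additive $1$. The paper's route gains uniformity across the three models via the effective dimension. Yours gives a cleaner and slightly tighter finite-dimensional argument.

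Two clean-ups are needed.

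First, your ``main obstacle'' paragraph is needlessly hedged. Drop the intermediate inequality with the extra $O(\log(1+X/B))$ term. Since $\sqrt{T}X/(Bn)\le TX/(Bn)$ for $T\ge1$, you already get the stated logarithm up to a factor $2$ outside it; the same holds with $XZ$ and $np$ in the linear case.

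Second, in the kernel case, ``handled again by $\log(1+a^2)\le 2\log(1+a)$'' silently needs the hypothesis $\lambda\le T$. With $a=\sqrt{T/\lambda}\,\kappa X$, you need $a\le T\kappa X/\lambda$, i.e.\ $T/\lambda\ge1$. State this explicitly. It is exactly why the corollary restricts to $\lambda\in(0,T]$, and the paper uses it in the form $b=T/\lambda\ge1$.
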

\wraprestatablewithlabelrestore{instantiatedregretcorollary}
\pagebreak

The non-contextual result recovers the state-of-the-art $O(n \log T)$ regret bound of the existing studies \citealt[Theorem~4.2]{Gollapudi2021-ad} \citealt[Theorem~3.1]{Sakaue2025-vb}, as well as its suboptimality-robust version 
\citep[Theorem~4.1]{Sakaue2025-vb}.\footnote{
  As in the prior work, we state the bounds with $\lambda$ tuned under the assumption that $B$ is known, for direct comparison. For implementation, however, it suffices to use any known upper bound on $B$, or one may simply take $\lambda=n$ and $\lambda=np$ in the non-contextual and linear contextual cases, respectively, which changes the argument of the logarithm by a multiplicative factor of $B$.

} 
By contrast, the extension to the infinite-dimensional kernelized setting is a new result of this work.

\section{Implementation and computational considerations}\label{sec:implementation}

We discuss how to implement \cref{alg:second-order} and the dominant per-round computational factors.
In contextual models, the update proceeds as follows: 
solve the linear system for $\hat w_{t}=-A_{t-1}^{-1}\zeta_{t-1}$, 
compute $\hat w_t^{\mathrm{base}}=\Psi_t^\ast\hat w_t$, solve linear optimization to obtain $\hat x_t^{\mathrm{base}}\in\argmax_{x\in\X_t^{\mathrm{base}}}\ip{\hat w_t^{\mathrm{base}}}{x}$, compute $g_t=\Psi_t(\hat x_t^{\mathrm{base}}-x_t^{\mathrm{base}})$, 
and 
update $A_t = A_{t-1}+g_t\otimes g_t$ and $\zeta_t=\zeta_{t-1}+g_t$.
Among these steps, major computational factors are 
(i) solving the linear system for $\hat w_{t}$, 
(ii) feature map evaluations, 
and 
(iii) linear optimization to find~$\hat x_t^{\mathrm{base}}$.
Writing the computational costs of these steps at round~$t$ as 
$\tau_{\mathrm{solve}}(t)$, 
$\tau_{\mathrm{feat}}(t)$, 
and 
$\tau_{\mathrm{opt}}(t)$, respectively, we express the per-round computational cost as
\begin{equation}\label{eq:impl-cost-template}
  \tau_{\mathrm{solve}}(t)
  +
  \tau_{\mathrm{feat}}(t)
  +
  \tau_{\mathrm{opt}}(t).
\end{equation}

\textbf{Finite-dimensional models.\;}
Consider finite-dimensional cases with $\VV=\R^d$, where $d=n$ in the non-contextual model and $d=np$ in the linear contextual model.
Evaluating feature maps for computing $g_t$ takes $\tau_{\mathrm{feat}}(t)=O(d)$. 
With the rank-one-update structure of $A_t=A_{t-1}+g_t \otimes g_t$, solving the linear system for $\hat w_{t+1}$ can be done in $O(d^2)$ time, hence $\tau_{\mathrm{solve}}(t)=O(d^2)$.
Therefore, the per-round time complexity is 
$
\tau_{\mathrm{feat}}(t)+\tau_{\mathrm{opt}}(t)+\tau_{\mathrm{solve}}(t)
  =
  O\prn*{d^2+\tau_{\mathrm{opt}}(t)}
$, 
where~$\tau_{\mathrm{opt}}(t)$ depends on the structure of $\X_t^{\mathrm{base}}$. 
Summing over $t$ yields the total time complexity in the bottom row of \cref{tab:finite-dim-cost-comparison}.

\textbf{Kernelized contextual model.\;}
We present the implementation details for the kernelized model in \cref{app:kernelized-impl-details}, and here we summarize the main computational factors.
Based on the representer form, we work with the Gram system $K_t+\lambda \Id_t$.
Letting $\tau_{Kv}$ denote the cost of one kernel-vector product $\mathcal{K}(z,z')v\in\R^n$, we can implement the update so that we have 
$\tau_{\mathrm{solve}}(t)=O(t^2)$
and 
$\tau_{\mathrm{feat}}(t)=O(t\,\tau_{Kv})$. 
Hence, the per-round cost is 
$\tau_{\mathrm{solve}}(t)
+
\tau_{\mathrm{feat}}(t)+\tau_{\mathrm{opt}}(t)
  =
  O\prn*{t^2+t\,\tau_{Kv}+\tau_{\mathrm{opt}}(t)}$.

\section{Conclusion}\label{sec:conclusion}
We have presented an efficient projection-free algorithm with logarithmic regret bounds for contextual recommendation.
The same algorithm is also shown to be robust to suboptimal feedback actions.
Our analysis reveals the power of the second-order-perceptron update for contextual recommendation, which we believe is a meaningful conceptual contribution.
We have also provided a unified view for handling linear and kernelized contextual models.
Experiments in \cref{app:dashboard-layout-preview} demonstrate the empirical advantages of our algorithm over prior approaches.

Regarding limitations, our guarantees still rely on access to an exact linear-optimization oracle over each feasible set.
In addition, the exact kernelized implementation still incurs $O(t^2)$ per-round cost, and thus further approximation or compression is needed for truly large-scale or very long-horizon settings.
Addressing these limitations is an important and interesting direction for future work.

\section*{Acknowledgements}
Shinsaku Sakaue is supported by JST BOOST Program Grant Number JPMJBY24D1.

{\newrefcontext[sorting=nyt]
\printbibliography[heading=bibintoc]}

\clearpage
\appendix
\section{Additional preliminaries on operators}\label{app:operator-preliminaries}
%
%
%

Most of the fundamental tools used in this paper are covered in \cref{sec:setup}. 
Here, we present additional preliminaries on operators that are used in the proofs.

In the proof of \cref{lem:potential-growth}, we use the Sherman--Morrison identity of the form 
\[
  \prn*{A+g\otimes g}^{-1}
  =
  A^{-1}
  -
  \frac{A^{-1}g\otimes A^{-1}g}{1+\ip{g}{A^{-1}g}},
\]
where $A\colon\VV\to\VV$ is an invertible operator and $g$ is a vector in $\VV$. 
Also, in the proof of \cref{lem:elliptical} in \cref{app:epl-logdet-proof}, we use the Woodbury identity of the form
\[
  \prn*{\lambda\Id+GG^\ast}^{-1}
  =
  \lambda^{-1}\Id-\lambda^{-2}G\prn*{\Id + \lambda^{-1} G^\ast G}^{-1}G^\ast,
\]
where $G\colon\R^m\to\VV$ is a bounded linear map, $G^\ast\colon\VV\to\R^m$ is its adjoint, and $\lambda>0$.
These are special cases of the following Sherman--Morrison--Woodbury identity. 
While the finite-dimensional version is well-known, it extends to the general Hilbert-space setting (e.g., \citet[Theorem~1.1]{Deng2011-co}).
We present the statement and proof here for completeness.
\begin{proposition}
Let $A\colon\VV\to\VV$ be an invertible operator and $G\colon\R^m\to\VV$ be a bounded linear map.
Assume that $\Id_m+G^\ast A^{-1} G$ is invertible.
Then, we have
\[
  (A + GG^\ast)^{-1} = A^{-1} - A^{-1}G(G^\ast A^{-1} G + \Id_m)^{-1}G^\ast A^{-1}.
\]
\end{proposition}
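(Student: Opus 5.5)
The plan is to verify the identity directly: set $B \coloneqq A^{-1} - A^{-1}G M^{-1}G^\ast A^{-1}$ with $M\coloneqq \Id_m+G^\ast A^{-1}G$, and show that $B$ is a two-sided inverse of $A+GG^\ast$. Since $G\colon\R^m\to\VV$ and $G^\ast\colon\VV\to\R^m$ are bounded and $A^{-1}$ is a (bounded) linear operator, $B$ is a well-defined linear operator on $\VV$. Also, $M$ is an $m\times m$ matrix that is invertible by assumption. Hence every manipulation below is an ordinary composition of linear maps, and no spectral or compactness arguments are needed.

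The first step is to compute $(A+GG^\ast)B$. Expanding gives
\[
(A+GG^\ast)B = \Id + GG^\ast A^{-1} - G M^{-1}G^\ast A^{-1} - G\,(G^\ast A^{-1}G)\,M^{-1}G^\ast A^{-1}.
\]
Group the last two terms as $-G\,(\Id_m + G^\ast A^{-1}G)M^{-1}G^\ast A^{-1} = -G\,M M^{-1} G^\ast A^{-1} = -GG^\ast A^{-1}$. This cancels the second term, so $(A+GG^\ast)B=\Id$.

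The second step is the symmetric computation $B(A+GG^\ast)$. Expanding gives
\[
B(A+GG^\ast) = \Id + A^{-1}GG^\ast - A^{-1}GM^{-1}G^\ast - A^{-1}GM^{-1}(G^\ast A^{-1}G)G^\ast.
\]
The last two terms combine to $-A^{-1}G M^{-1}M G^\ast = -A^{-1}GG^\ast$, so $B(A+GG^\ast)=\Id$.

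Together, the two steps show that $A+GG^\ast$ is bijective with inverse $B$. The only point requiring care is the infinite-dimensional setting: we need a two-sided inverse rather than just one side, which is why both products are checked. The step I expect to need the most attention is bookkeeping the factorization $G^\ast A^{-1}G + \Id_m = M$ in the correct order on each side. The special cases then follow by specialization. Taking $m=1$ with $G$ mapping $c\mapsto cg$ gives $GG^\ast = g\otimes g$, which is Sherman--Morrison. Taking $A=\lambda\Id$ gives the Woodbury form, and there invertibility of $\Id_m+\lambda^{-1}G^\ast G$ holds automatically because this matrix is positive definite.
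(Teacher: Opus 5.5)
Your proof is correct and matches the paper's argument. The paper also forms the candidate inverse $A^{-1}-A^{-1}G(\Id_m+G^\ast A^{-1}G)^{-1}G^\ast A^{-1}$ and verifies that it is both a left and a right inverse of $A+GG^\ast$, merging the two correction terms via $(\Id_m+G^\ast A^{-1}G)(\Id_m+G^\ast A^{-1}G)^{-1}=\Id_m$ on one side and the reverse product on the other. A small point on your $m=1$ remark: writing that case in the paper's form $A^{-1}g\otimes A^{-1}g$ also uses $\ip{g}{A^{-1}v}=\ip{A^{-1}g}{v}$, i.e., self-adjointness of $A$, which holds for the operators $\lambda\Id+\sum_s g_s\otimes g_s$ where it is applied.
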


\begin{proof}
Let $M\coloneqq G^\ast A^{-1}G\in\R^{m\times m}$ and assume $\Id_m+M$ is invertible.
Define
\[
  B\coloneqq(\Id_m+M)^{-1}
  \qquad\text{and}\qquad
  X\coloneqq A^{-1}-A^{-1}GBG^\ast A^{-1}.
\]
We first check that $X$ is a left inverse of $A+GG^\ast$:
\begin{align}
  (A+GG^\ast)X
  &= \Id + GG^\ast A^{-1} - GBG^\ast A^{-1} - GG^\ast A^{-1}GBG^\ast A^{-1}
  \\
  &= \Id + G(\Id_m-B-MB)G^\ast A^{-1}
  = \Id,  
\end{align}
where the last equality follows from $(\Id_m+M)B=\Id_m$.
Likewise, $X$ is a right inverse:
\begin{align}
  X(A+GG^\ast)
  &= \Id + A^{-1}GG^\ast - A^{-1}GBG^\ast - A^{-1}GBG^\ast A^{-1}GG^\ast
  \\
  &= \Id + A^{-1}G(\Id_m-B-BM)G^\ast
  = \Id,
\end{align}
where we used $B(\Id_m+M)=\Id_m$.
Hence, $X=(A+GG^\ast)^{-1}$, i.e.,
\[
  (A + GG^\ast)^{-1} = A^{-1} - A^{-1}G(G^\ast A^{-1} G + \Id_m)^{-1}G^\ast A^{-1},
\]
as claimed.
\end{proof}

\section{Proof of Lemma~\ref{lem:elliptical}}\label{app:epl-logdet-proof}
\ellipticallemma*
\begin{proof}
As shown in \eqref{eq:gt-mu}, $\ip{g_t}{A_t^{-1}g_t}=\frac{\mu_t}{1+\mu_t}$ holds, where $\mu_t\coloneqq\ip{g_t}{A_{t-1}^{-1}g_t}$.
Using ${\mu}/{(1+\mu)}\le \log(1+\mu)$ for all $\mu\ge 0$ and summing over $t$, we obtain
\begin{equation}\label{eq:log-mu}
  \sum_{t=1}^T\ip{g_t}{A_t^{-1}g_t}
  \le \sum_{t=1}^T\log(1+\mu_t).
\end{equation}
It remains to show
\begin{equation}\label{eq:logdet-mu}
  \sum_{t=1}^T\log(1+\mu_t)=\log\det\prn*{\Id_T+\lambda^{-1}K_T}.
\end{equation}
For each $t$, define
\[
  K_t=(\ip{g_i}{g_j})_{i,j=1}^t\in\R^{t\times t},
  \qquad
  k_t=(\ip{g_i}{g_t})_{i=1}^{t-1}\in\R^{t-1},
  \qquad\text{and}\qquad
  M_t=\Id_t+\lambda^{-1}K_t.
\]
Let $G_{t-1}\colon\R^{t-1}\to\VV$ be given by $G_{t-1}e_i=g_i$ for $i=1,\dots,t-1$, where $e_i$ is the $i$-th standard basis vector in $\R^{t-1}$.
Its adjoint $G_{t-1}^\ast\colon\VV\to\R^{t-1}$ is defined by $\ip{G_{t-1}a}{v}=a^\top(G_{t-1}^\ast v)$ for all $a\in\R^{t-1}$ and $v\in\VV$, so that $(G_{t-1}^\ast v)_i=\ip{g_i}{v}$ holds.
Thus, for all $v\in\VV$, we have
\[
  G_{t-1}G_{t-1}^\ast v = \sum_{s=1}^{t-1} \ip{g_s}{v} g_s = \sum_{s=1}^{t-1} (g_s \otimes g_s) v,
\] 
hence $G_{t-1}G_{t-1}^\ast=\sum_{s=1}^{t-1} g_s\otimes g_s$.
Also, for all $a\in\R^{t-1}$, we have
\[
  G_{t-1}^\ast G_{t-1} a = \sum_{s=1}^{t-1} a_s G_{t-1}^\ast g_s = \sum_{s=1}^{t-1} a_s (\ip{g_1}{g_s},\dots,\ip{g_{t-1}}{g_s})^\top = K_{t-1} a,
\]
hence $G_{t-1}^\ast G_{t-1}=K_{t-1}$.
Therefore, we have 
\[
  A_{t-1}=\lambda\Id+G_{t-1}G_{t-1}^\ast \quad\text{and}\quad M_{t-1}=\Id_{t-1}+\lambda^{-1}G_{t-1}^\ast G_{t-1}.
\]
The Woodbury identity (see \cref{app:operator-preliminaries}) yields
\[
  A_{t-1}^{-1}=\lambda^{-1}\Id-\lambda^{-2}G_{t-1}M_{t-1}^{-1}G_{t-1}^\ast.
\]
Hence, we have
\begin{equation}\label{eq:A-woodbury}
  \mu_t
  =\ip{g_t}{A_{t-1}^{-1}g_t}
  =\lambda^{-1}\ip{g_t}{g_t}-\lambda^{-2}k_t^\top M_{t-1}^{-1}k_t.
\end{equation}
Moreover, $M_t$ has the block decomposition:
\[
  M_t=
  \begin{pmatrix}
    M_{t-1} & \lambda^{-1}k_t\\
    \lambda^{-1}k_t^\top & 1+\lambda^{-1}\ip{g_t}{g_t}
  \end{pmatrix}.
\]
By the Schur complement formula and \eqref{eq:A-woodbury}, we have
\[
  \det(M_t)=\det(M_{t-1})(1+\mu_t).
\]
Iterating this identity (with $\det(M_0)=1$) gives
$\det(M_T)=\prod_{t=1}^T(1+\mu_t)$, which is equivalent to the desired equality \eqref{eq:logdet-mu}.
Combining \eqref{eq:log-mu} and \eqref{eq:logdet-mu} proves \eqref{eq:epl}.
\end{proof}

\section{Proof of Lemma~\ref{lem:self-bounding-subopt}}
\label{app:proof-self-bounding}
\selfboundinglemma*
\begin{proof}
Fix $t$ and note that $|r_t|\le B$ holds by \cref{ass:bounded-payoff}.
Since $\hat x_t\in\X_t$ is feasible, the definition of~$\delta_t(u) \ge 0$ gives
\begin{equation}\label{eq:delta-r}
  \delta_t(u)
  =
  \max_{x\in\X_t}\ip{u}{x}-\ip{u}{x_t}
  \ge \ip{u}{\hat x_t}-\ip{u}{x_t}=-r_t.
\end{equation}
If $r_t\ge 0$, then $0 \le r_t \le B$ implies $r_t^2 - Br_t \le 0$.
If $r_t<0$, then $0 \le -r_t \le B$ and \eqref{eq:delta-r} imply
\[
  r_t^2 - Br_t
  \le
  B(-r_t) - Br_t
  \le
  2B\delta_t.
\]
Thus, in any case, we have 
$
r_t^2 - Br_t
  \le
  \max\{0, 2B\delta_t\}
  =
  2B\delta_t
$.
Summing over $t$ gives $\sum_{t=1}^T r_t^2\le B R_T(u)+2B\Delta_T(u)$, as desired.
\end{proof}

\section{Proof of Lemma~\ref{lem:logdet-deff}}\label{app:logdet-deff-proof}
\logdetdefflemma*
\begin{proof}
Let $\sigma_1,\dots,\sigma_m > 0$ be the non-zero eigenvalues of $K_T$.
Then, we have
\[
  \log\det\prn*{\Id+\lambda^{-1}K_T}
  =\sum_{i=1}^m\log\prn*{1+\frac{\sigma_i}{\lambda}}.
\]
Since $\log(1+x)\le x$ for $x\ge 0$, we have
\[
(1+x)\log(1+x)
=
\log(1+x) + x\log(1+x)
\le 
x(1+\log(1+x)),
\]
that is,
\[
\log(1+x)\le \frac{x}{1+x}\prn*{1+\log(1+x)}.
\]
Applying this with $x=\sigma_i/\lambda$ and using $\log(1+\sigma_i/\lambda)\le \log(1+\|K_T\|_{\mathrm{op}}/\lambda)$ yield
\[
  \log\prn*{1+\frac{\sigma_i}{\lambda}}
  \le \frac{\sigma_i}{\sigma_i+\lambda}\prn*{1+\log\prn*{1+\lambda^{-1}\|K_T\|_{\mathrm{op}}}}.
\]
Summing over $i$ and using $\sum_{i=1}^m \frac{\sigma_i}{\sigma_i+\lambda}=d_{\mathrm{eff}}^T(\lambda)$ prove the claim.
\end{proof}

\section{Proof of Corollary~\ref{cor:logT}}\label{app:modelwise-ht-bounds}
\instantiatedregretcorollary*
\begin{proof}[Proof of Corollary~\ref{cor:logT}]
Given \cref{lem:logdet-deff}, it remains to bound $\|K_T\|_{\mathrm{op}}$ for the three models (and $d_{\mathrm{eff}}^T(\lambda)$ for the finite-dimensional models).
From $K_T = (\ip{g_i}{g_j})_{i,j=1}^T$, $g_t=\Psi_t(g_t^{\mathrm{base}})$, and $\norm{g^\mathrm{base}_t}_2\le X$, we have the following bound on $\|K_T\|_{\mathrm{op}}$, which we will use later:
\[
  \|K_T\|_{\mathrm{op}}\le \tr(K_T)
  =\sum_{t=1}^T\ip{g_t^{\mathrm{base}}}{\Psi_t^\ast \Psi_t(g_t^{\mathrm{base}})}_{\R^n}
  \le \sum_{t=1}^T\|\Psi_t^\ast \Psi_t\|_{\mathrm{op}}\|g_t^{\mathrm{base}}\|_2^2
  \le X^2\sum_{t=1}^T\|\Psi_t^\ast \Psi_t\|_{\mathrm{op}}.
\]

\textbf{Non-contextual model ($\VV=\R^n$).\;}
From $\Psi_t = \Psi_t^\ast = \Id_n$, we have $\|K_T\|_{\mathrm{op}}\le T X^2$. Also, we have
$d_{\mathrm{eff}}^T(\lambda)\le \dim(\VV)=n$.
Therefore, we obtain
\[
  H_T\le n\prn*{1+\log\prn*{1+\tfrac{T X^2}{\lambda}}}.
\]

\textbf{Linear contextual model ($\VV=\R^{n\times p}$).\;}
Assume additionally $\|z_t\|_2\le Z$ for all $t$.
Then, we have $\|\Psi_t^\ast \Psi_t\|_{\mathrm{op}}=\|z_t z_t^\top\|_{\mathrm{op}}=\|z_t\|_2^2\le Z^2$, hence $\|K_T\|_{\mathrm{op}}\le T X^2 Z^2$. 
Also, $d_{\mathrm{eff}}^T(\lambda)\le \dim(\VV)=np$ holds.
Therefore, we obtain
\[
  H_T\le np\prn*{1+\log\prn*{1+\tfrac{T X^2 Z^2}{\lambda}}}.
\]

\textbf{Kernelized contextual model (RKHS).\;}
Assume additionally $\|\mathcal{K}(z,z)\|_{\mathrm{op}}\le \kappa^2$ for all $z\in\Z$.
Then, $\|\Psi_t^\ast \Psi_t\|_{\mathrm{op}}=\|\mathcal{K}(z_t,z_t)\|_{\mathrm{op}}\le \kappa^2$ holds, hence $\|K_T\|_{\mathrm{op}}\le T X^2 \kappa^2$.
Therefore, we obtain
\[
  H_T\le d_{\mathrm{eff}}^T(\lambda)\prn*{1+\log\prn*{1+\tfrac{T X^2 \kappa^2}{\lambda}}}.
\]

Below, we simplify those bounds on $H_T$. Note that the bounds take the form
$H_T \le d(1+\log(1+a))$ for some positive $d$ and $a$. For $a \ge 0$, it holds that
$1+\log(1+a) \le \frac{1}{\log 2}\log(2+a)$. Therefore, we have
$H_T = O(d\log(2+a))$. Let us take a closer look at the argument $a$ in each model, with $\log(2 + bc^2)\le2\log(2+bc)$ for $b \ge 1$ and $c > 0$ in mind.
In the non-contextual case, suppose $T > n$; otherwise $R_T(u) \le Bn$ trivially holds.
Then, setting $\lambda = B^2 n$ yields
$H_T = O\left(n \log\left(2+\frac{TX}{Bn}\right)\right)$.
Similarly, in the linear contextual case, we can assume $T > np$ (otherwise $R_T(u) \le Bnp$ trivially holds), and setting $\lambda = B^2 np$ yields
$H_T = O\left(np \log\left(2+\frac{TXZ}{Bnp}\right)\right)$.
In the kernelized case, we keep $\lambda \in (0,T]$ arbitrary, hence
$H_T = O\left(d_T^{\mathrm{eff}}(\lambda)\log\left(2+\frac{TX\kappa}{\lambda}\right)\right)$.
Combining these bounds with~\eqref{eq:robust-demo-sqrt} in \cref{thm:robust-demo} and using the condition that
$u$, $U$, and $h^\star$ have norm at most $1$ in their respective spaces yield the
displayed regret bounds.\looseness=-1
\end{proof}


\section{Kernelized implementation details}\label{app:kernelized-impl-details}

We present the implementation details for the kernelized model discussed in \cref{sec:implementation}. 
At the beginning of every round $t$, we maintain
\[
  L_{t-1}L_{t-1}^\top = K_{t-1}+\lambda \Id_{t-1}
  \qquad
  \text{and}
  \qquad
  c_{t-1}=\prn*{K_{t-1}+\lambda \Id_{t-1}}^{-1}\mathbf{1}_{t-1},
\]
where $L_{t-1}\in\R^{(t-1)\times (t-1)}$ is the Cholesky factor of $K_{t-1}+\lambda \Id_{t-1}$ and $\mathbf{1}_{t-1}\in\R^{t-1}$ is the all-ones vector.
Define $G_{t-1}\colon\R^{t-1}\to\HH$ by $G_{t-1}e_s=g_s$, so that we have
\[
  \zeta_{t-1}=G_{t-1}\mathbf{1}_{t-1}, \quad
  A_{t-1}=\lambda \Id+G_{t-1}G_{t-1}^\ast, \quad
  \text{and}
  \quad
  K_{t-1}=G_{t-1}^\ast G_{t-1}.
\]
Using $\hat w_t=-A_{t-1}^{-1}\zeta_{t-1}$ and 
$\prn*{\lambda\Id+GG^\ast}^{-1}G
  =
  G\prn*{G^\ast G+\lambda\Id}^{-1}$, 
  which holds for any bounded linear map $G$ and $\lambda>0$, we have
\[
  \hat w_t
  =
  -\prn*{\lambda\Id+G_{t-1}G_{t-1}^\ast}^{-1}G_{t-1}\mathbf{1}_{t-1}
  =
  -G_{t-1}\prn*{K_{t-1}+\lambda \Id_{t-1}}^{-1}\mathbf{1}_{t-1}
  =
  -\sum_{s=1}^{t-1}(c_{t-1})_s g_s.
\]
Using these maintained quantities, we proceed as follows 
(for convenience, we put the computation of $\hat w_{t+1}$ at the end of round $t$, but it is equivalent to computing $\hat w_{t}$ at the beginning of round $t$):
\begin{enumerate}[leftmargin=2em, itemsep=0.5em]
\item Compute the predicted objective vector
\[
  \hat w_t^{\mathrm{base}}=\Psi_t^\ast \hat w_t=-\sum_{s=1}^{t-1}(c_{t-1})_s\,\Psi_t^\ast g_s.
\]
In the kernelized model, we have $\Psi_t^\ast g_s=\mathcal{K}(z_t,z_s)g_s^{\mathrm{base}}$, and thus this computation requires $t-1$ kernel-vector products.
\item Call the base linear optimization oracle to compute
\[
  \hat x_t^{\mathrm{base}}\in\argmax \Set*{\ip{\hat w_t^{\mathrm{base}}}{x}}{{x\in\X_t^{\mathrm{base}}}},
\]
observe $x_t^{\mathrm{base}}$, and set $g_t=\Psi_t\prn*{\hat x_t^{\mathrm{base}}-x_t^{\mathrm{base}}}$.
\item Compute the new Gram entries
\[
  k_t=\prn*{\ip{g_s}{g_t}_{\HH}}_{s=1}^{t-1}
  \qquad
  \text{and}
  \qquad
  \rho_t=\ip{g_t}{g_t}_{\HH},
\]
where $\ip{g_s}{g_t}_{\HH}
  =
  \ip{g_s^{\mathrm{base}}}{\mathcal{K}(z_s,z_t)g_t^{\mathrm{base}}}_{\R^n}$.
\item Update the Cholesky factor from $L_{t-1}$ to $L_t$ for $K_t+\lambda \Id_t$ using the block update
\[
  K_t+\lambda \Id_t
  =
  \begin{bmatrix}
    K_{t-1}+\lambda \Id_{t-1} & k_t \\
    k_t^\top & \rho_t+\lambda
  \end{bmatrix}.
\]
Specifically, given $L_{t-1}L_{t-1}^\top=K_{t-1}+\lambda \Id_{t-1}$, we first solve
\[
  L_{t-1}y_t=k_t.
\]
Then we set
\[
  \beta_t=\sqrt{\rho_t+\lambda-\|y_t\|_2^2}
  \qquad
    \text{and}
    \qquad
  L_t=
  \begin{bmatrix}
    L_{t-1} & 0 \\
    y_t^\top & \beta_t
  \end{bmatrix}.
\]
Since $K_t+\lambda \Id_t\succ 0$, $\beta_t$ is well-defined, and $L_tL_t^\top=K_t+\lambda \Id_t$ holds.
\item Solve
\[
  \prn*{K_t+\lambda \Id_t}c_t=\mathbf{1}_t
\]
by forward substitution with $L_t$ and backward substitution with $L_t^\top$, and set
\[
  \hat w_{t+1}=-\sum_{s=1}^{t}(c_t)_s g_s.
\]
\end{enumerate}
In each round, computing $\hat w_t^{\mathrm{base}}$ and updating the new Gram row/column each require $O(t\,\tau_{Kv})$ time. Thus, the feature-evaluation cost is
\[
  \tau_{\mathrm{feat}}(t)=O\prn*{t\,\tau_{Kv}}.
\]
From Steps 4--5, solving the linear system for $\hat w_{t+1}$ consists of (i) one forward substitution for $L_{t-1}y_t=k_t$ and
(ii) one forward and one backward substitution for
$\prn*{K_t+\lambda \Id_t}c_t=\mathbf{1}_t$ via $L_t$ and~$L_t^\top$.
Thus, the cost of solving the linear system is
\[
  \tau_{\mathrm{solve}}(t)=O\prn*{(t-1)^2}+O\prn*{t^2}=O(t^2).
\]
Therefore, the total per-round computational cost is
\[
  \tau_{\mathrm{solve}}(t) + \tau_{\mathrm{feat}}(t)+\tau_{\mathrm{opt}}(t)
  =
  O\prn*{t^2+t\,\tau_{Kv}+\tau_{\mathrm{opt}}(t)}.
\]


\section{Comparison with ONS-based methods}
\label{app:ons-comparison}

We discuss the time complexity bounds of the ONS-based methods shown in \cref{tab:finite-dim-cost-comparison} in the finite-dimensional setting, i.e., $\VV=\R^d$.

The ONS-based method of \citet{Sakaue2025-vb} shares the same $\tau_{\mathrm{feat}}(t)$ and $\tau_{\mathrm{opt}}(t)$ terms, but each round additionally solves
$
\hat w_t\in\argmin_{w: \|w\|\le 1}\norm{w'_t-w}_{A_{t-1}}
$
for Mahalanobis projection, where~$w'_t$ is the unconstrained point.
The fastest implementation (e.g., \citet[Algorithm~4]{Wang2025-lj}) runs in~$O(d^\omega)$ time (up to logarithmic factors), yielding the per-round time complexity of $O\prn*{d^\omega+\tau_{\mathrm{opt}}(t)}$. 
Summing this over $t$ gives the total time complexity in the top row of \cref{tab:finite-dim-cost-comparison}.
Since the current best matrix multiplication exponent is $\omega \simeq 2.3714$~\citep{Alman2025-df} (and standard implementations typically take~$O(d^3)$ time), this is more expensive than our CoRectron. 

LightONS, proposed by \citet{Wang2025-lj}, is an efficient variant of ONS.
It achieves a lower total time complexity of $O\prn*{Td^2+d^\omega\sqrt{T \log T}}$ by reducing the number of projection steps, while the~$\tau_{\mathrm{opt}}(t)$ part remains the same as $\hat x_t$ is needed to construct a subgradient.
Thus, using LightONS instead of the standard ONS in \citet{Sakaue2025-vb} yields the total time complexity in the middle row of \cref{tab:finite-dim-cost-comparison}.

When it comes to kernelized settings, the analogous projection subproblem typically takes $O(t^3)$ and is therefore even more expensive than the per-round runtime of CoRectron. 
This practical advantage is also reflected in the experiments in \cref{app:dashboard-layout-preview}, where CoRectron typically achieves lower runtime than ONS while maintaining competitive or smaller regret.\looseness=-1 

\section{Numerical experiments}
\label{app:dashboard-layout-preview}

We conducted the experiments on a MacBook Air equipped with an Apple M4 chip (10 cores) and 32\,GB RAM, running macOS 26.2.
The reported results were generated using Python 3.11.14 with NumPy 2.4.3, pandas 3.0.1, and Matplotlib~3.10.8.

\subsection{Experiments with optimal-action feedback}\label[appendix]{subsec:experiments-optimal}
This section presents experiments with optimal-action feedback; 
\cref{subsec:experiments-suboptimal} considers suboptimal-action feedback.

\subsubsection{Experimental setup}

\textbf{Problem setting: contextual $m$-out-of-$n$.\;}
At each round $t$, a context $z_t\in\R^{p}$ is observed, and we define the (unnormalized) action set
\[
  \X_{\mathrm{raw}}
  =
  \set*{x\in\set*{0,1}^{n}\colon \sum_{i=1}^{n}x_i=m}.
\]
This corresponds to selecting $m$ items from $n$ candidates.
In our experiments, we fix $n=10$, $m=5$, and $p=10$, and the user and learner choose actions from the normalized action set
\[
  \X_t^{\mathrm{base}}
  =
  \frac{1}{\sqrt{10}}\,\X_{\mathrm{raw}}.
\]
That is, each action vector is scaled by $1/\sqrt{10}$, thereby ensuring that any $x,x'\in\X_t^{\mathrm{base}}$ satisfies $\norm{x-x'}_2\le 1$. 
We use the same base action set $\X_t^{\mathrm{base}}$ for all $t$ for simplicity, but the contexts~$z_t$ vary over time; 
therefore, the lifted action sets $\X_t$ vary over time.
Let $u^\mathrm{base}_t\in\R^{n}$ denote the user's base utility vector (specified later for linear and kernel models), and let
\[
  x_t^{\mathrm{base}} \in \argmax\Set*{\ip{u^\mathrm{base}_t}{x}}{x\in\X_t^{\mathrm{base}}}
  \quad
  \text{and}
  \quad 
  R_T = \sum_{t=1}^{T} \ip{u^\mathrm{base}_t}{x_t^{\mathrm{base}}-\hat x_t^{\mathrm{base}}}.
\]
Here, $x_t^{\mathrm{base}}$ is the user's selected base action, 
$\hat x_t^{\mathrm{base}}$ is the learner's recommended base action, 
and~$R_T$ is the cumulative regret.

\textbf{Data generation.\;} 
At each round, we sample
      $\tilde z_t\sim\mathcal{N}\prn*{0,\Id_p}$ (standard normal) and set
      \[
        z_t=
        \begin{cases}
          \tilde z_t & \norm{\tilde z_t}_2\le 1\\
          \tilde z_t/\norm{\tilde z_t}_2 & \norm{\tilde z_t}_2>1
        \end{cases}
      \]
      so that $\norm{z_t}_2\le 1$ always holds.
To generate the true utility $u^\mathrm{base}_t$, we consider two settings, referred to below as Linear and Kernel, where Kernel uses the radial basis function (RBF) kernel.
\begin{itemize}[leftmargin=*, itemsep=0.2em, topsep=0.3em]
\item Linear:
      $u^\mathrm{base}_t = U^\star z_t$,
      where 
      $U^\star\in\R^{n\times p}$, and
      $\norm{U^\star}_{\mathrm{F}}=1$
      ($\norm{\cdot}_{\mathrm{F}}$ is the Frobenius norm).
\item Kernel:
      $u^\mathrm{base}_t = h^\star\prn*{z_t}$, where
      \[
        h^\star\prn*{z}=\sum_{j=1}^{J}k_{\mathrm{rbf}}\prn*{z,c_j;\vartheta_{\mathrm{rbf}}}\,a_j
        \qquad
        \text{and}
        \qquad
        k_{\mathrm{rbf}}\prn*{z,z';\vartheta_{\mathrm{rbf}}}
        =
        \exp\prn*{-\frac{\norm{z-z'}_2^2}{2\vartheta_{\mathrm{rbf}}^2}}.
      \]
      Here, $\vartheta_{\mathrm{rbf}}>0$ is the RBF bandwidth (length scale),
      $c_j\in\R^{p}$ are kernel centers, $a_j\in\R^{n}$ are coefficient vectors,
      and $J=16$ is the number of centers.
The centers $c_j$ are independently drawn from the standard normal distribution
and then projected onto the unit ball, followed by rescaling to match the context support.
      Coefficients are first sampled from the standard normal distribution, $a_j^{\mathrm{raw}}\sim\mathcal{N}\prn*{0,\Id_n}$, and we set
      $A_{\mathrm{raw}}=[\prn*{a_1^{\mathrm{raw}}}^\top;\dots;\prn*{a_J^{\mathrm{raw}}}^\top]\in\R^{J \times n}$ 
      and
      $K_{ij}= k_{\mathrm{rbf}}\prn*{c_i,c_j;\vartheta_{\mathrm{rbf}}}$. 
      We then rescale the coefficients as
      \[
        A=\frac{1}{\sqrt{\tr\prn*{A_{\mathrm{raw}}^\top K A_{\mathrm{raw}}}}}A_{\mathrm{raw}},
      \]
      thereby normalizing the RKHS norm of $h^\star$ to $1$.
\end{itemize}

\textbf{Evaluation scenarios.\;}
In the Linear setting, we set the time horizon to $T=10000$.
For the Kernel setting, we evaluate 
$\vartheta_{\mathrm{rbf}}\in\{0.5,1.0,2.0\}$ 
and set $T=1000$ to save computational cost.
For each scenario, we run experiments with 10 random seeds; in each seed, all algorithms share the same round sequence (i.e., the same $z_t,u^\mathrm{base}_t$). 
Results are averaged over the 10 seeds. 

\textbf{Compared methods.\;}
We compare CoRectron-L (CoRectron for the linear-contextual setting),
ONS (Online Newton Step), OGD (Online Gradient Descent),
CoRectron-K (CoRectron for the RBF-kernel setting), and
KONS (Kernelized ONS).
In the Linear scenario, we evaluate the first three methods;
in the Kernel scenario, we evaluate all five methods.

Below, we describe the methods other than CoRectron; CoRectron itself was introduced in the main text. 
For convenience, we adopt the vectorized form.
All methods are initialized at the origin and iteratively perform updates using
the base-action residual gradient
$g_t^{\mathrm{base}} = \hat x_t^{\mathrm{base}}-x_t^{\mathrm{base}}$
at each round.
\begin{itemize}[leftmargin=*, itemsep=0.2em, topsep=0.3em]
\item OGD (Linear):
      Let $g_t= \mathrm{vec}(g_t^{\mathrm{base}} z_t^\top)\in\R^{np}$, where $\mathrm{vec}(\cdot)$ is the vectorization operator.
      With the step size $\eta_\mathrm{OGD} > 0$, it performs the update as
      \[
        \hat w_{t+1}
        =
        \argmin\ \Set*{
          \norm{w-\prn*{\hat w_t-\eta_\mathrm{OGD} g_t}}_2
        }{
          w\in\R^{np},\;\norm{w}_2\le 1
        }.
      \]
\item ONS (Linear):
      We use the same $g_t$.
      Following \citet{Sakaue2025-vb}, we apply ONS to exp-concave surrogate losses.
      In this case, the surrogate-loss parameter $\eta_{\mathrm{sur}}>0$ is fixed, and the ONS gradient is written as
      $\tilde g_t = \eta_{\mathrm{sur}} g_t$.
      Let $A_0=\epsilon \Id_{np}$ and perform the following update:
      \[
        A_t = A_{t-1}+\tilde g_t\tilde g_t^\top
        \qquad
        \text{and}
        \qquad
        \hat y_{t+1}=\hat w_t-\gamma_{\mathrm{ons}}^{-1}A_t^{-1}\tilde g_t.
      \]
      We then compute the Mahalanobis projection of $\hat y_{t+1}$ onto the unit ball:
      \[
        \hat w_{t+1}
        =
        \argmin\ \Set*{
          \norm{w-\hat y_{t+1}}_{A_t}
        }{
          w\in\R^{np},\;\norm{w}_2\le 1
        }.
      \]
\item KONS (Kernel):
      In the RKHS, we use the coefficient representation
      $\hat w_t=\sum_{i=1}^{t-1}\hat c_i\phi_i$, where
      $\phi_i= \mathcal{K}_{z_i}g_i^{\mathrm{base}}$.
      Define the zero-padded coefficient vector
      \[
        \hat c_t^{\mathrm{ext}}= \prn*{\hat c_1,\dots,\hat c_{t-1},0}^\top\in\R^t.
      \]
      Let $G_t$ be the Gram matrix of $\phi_1,\dots,\phi_t$, define
      $\tilde G_t\coloneqq \eta_{\mathrm{sur}}^2 G_t$, and let
      $e_t\in\R^t$ be the basis vector whose last component is $1$.
      We also define
\[
  B_t \coloneqq \tilde G_t+\epsilon \Id_t.
\]
      Then, the update is written as
\[
  q_t=B_t^{-1}e_t
  \qquad
    \text{and}
    \qquad
  \hat y_{t+1}=\hat c_t^{\mathrm{ext}}-\gamma_{\mathrm{ons}}^{-1}\prn*{\eta_{\mathrm{sur}}q_t}. 
\]
We then compute the Mahalanobis projection of $\hat y_{t+1}$ in the coefficient space:
\[
  \hat c_{t+1}
  =
  \argmin\ \Set*{
    \norm{c-\hat y_{t+1}}_{B_t}
  }{
    c\in\R^t,\;\norm*{\sum_{i=1}^{t}c_i\phi_i}_{\HH}\le 1
  }.
\]
\end{itemize}

\textbf{Hyperparameter setting.\;} 
Recall that the diameter of the base action set is $1$, and the utility vectors are normalized so that $\norm{u^\mathrm{base}_t}_2 \le 1$. 
Therefore, we set the bound $B$ that appears in \cref{ass:bounded-payoff}~to~$1$.
OGD, ONS, and KONS, described above, use projection onto the unit ball, and we set the radius parameter to $1$ for these methods. 
Given these choices, the analysis of \citet{Sakaue2025-vb} suggests using the surrogate-loss scaling parameter $\eta_{\mathrm{sur}}=1/10$ and the common step-size parameter $\gamma_{\mathrm{ons}}=1/2$ for both ONS and KONS. 
From $n = p = 10$, we define $d= np = 100$ in the Linear setting. 
In the Kernel setting, the effective dimension that appears in the theory is not available in advance. 
Thus, we use the same value $d = 100$ as its proxy. 
The remaining hyperparameters are $\eta_\mathrm{OGD}$ for OGD, $\epsilon$ for ONS and KONS, and $\lambda$ for CoRectron-L and CoRectron-K.
These act as regularization-strength parameters: larger $\epsilon$ or $\lambda$, and smaller $\eta_\mathrm{OGD}$, make the update more conservative. 
Because these parameters often strongly affect empirical performance, we introduce reference values $\bar\eta_\mathrm{OGD}$, $\bar\epsilon$, and $\bar\lambda$, and sweep them over the common coefficient grid 
$c\in\{10^{-3},10^{-2},10^{-1},1,10,10^2,10^3\}$ 
as follows:

\begin{figure}[tb]
  \centering
  \includegraphics[
    width=.7\textwidth
  ]{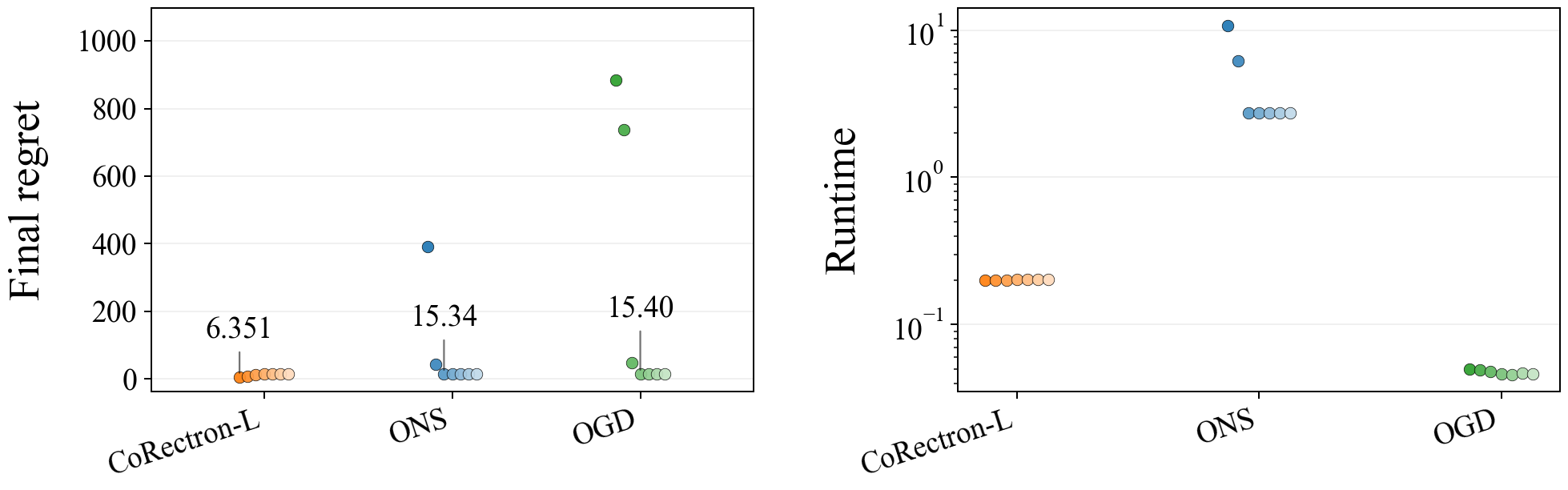}
  \caption{Overall comparison in the Linear setting. Final regret and runtime over the coefficient grid are shown. All values are means over 10 seeds. In each algorithm block, darker points correspond to smaller $c$, and $c$ increases from left to right. For final regret, the annotated value is the best over the coefficient sweep.}
  \label{fig:spread-dashboard-linear}
\end{figure}

\begin{figure}[tb]
  \centering
  \includegraphics[
    width=.7\textwidth
  ]{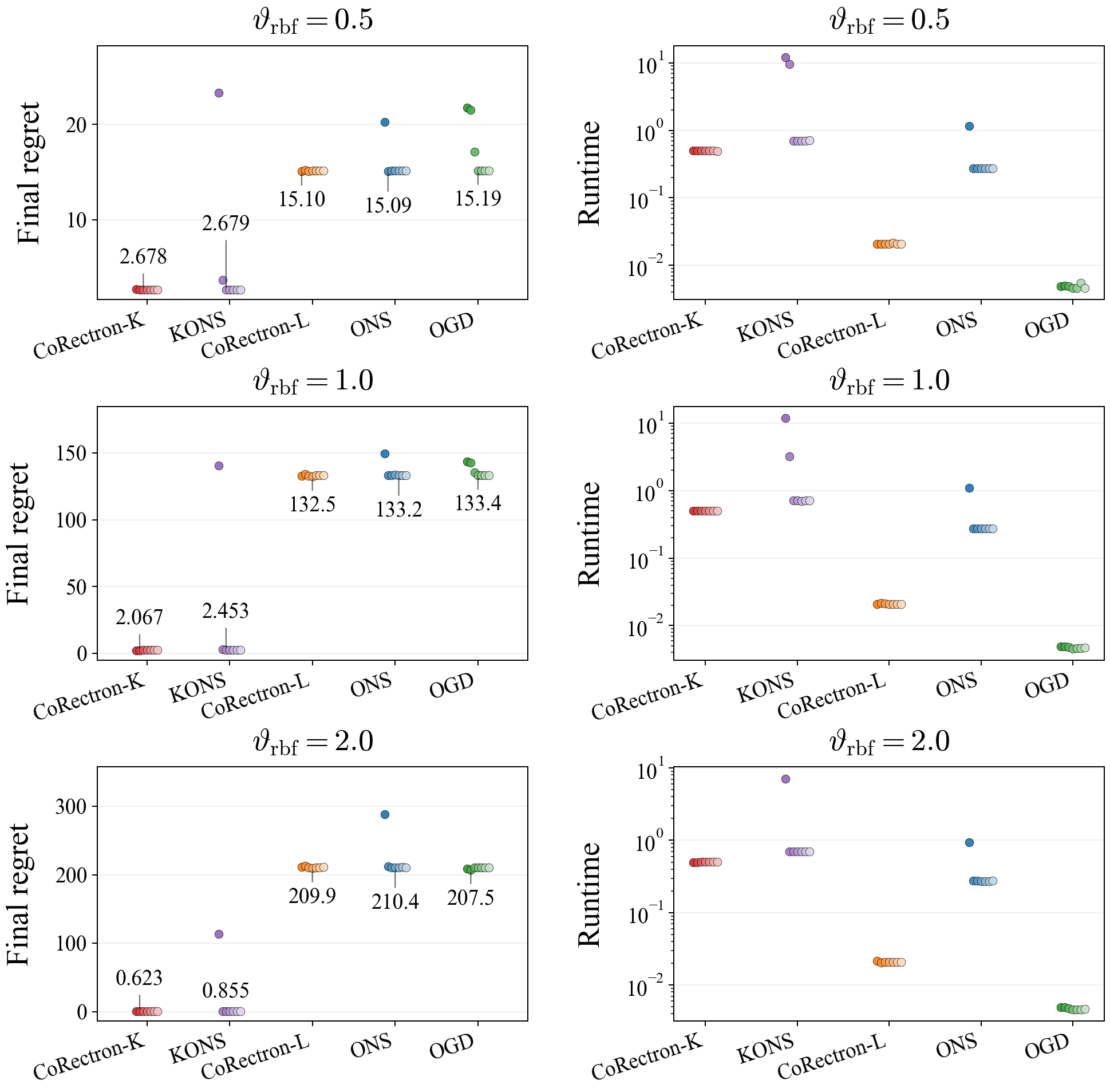}
  \caption{Overall comparison in the Kernel-RBF setting. Final regret and runtime over the coefficient grid are shown. All values are means over 10 seeds. In each algorithm block, darker points correspond to smaller $c$, and $c$ increases from left to right. For final regret, the annotated value is the best over the coefficient sweep.}
  \label{fig:spread-dashboard-kernel-rbf}
\end{figure}

\begin{figure}[tb]
  \centering
  \includegraphics[
    width=.7\textwidth
  ]{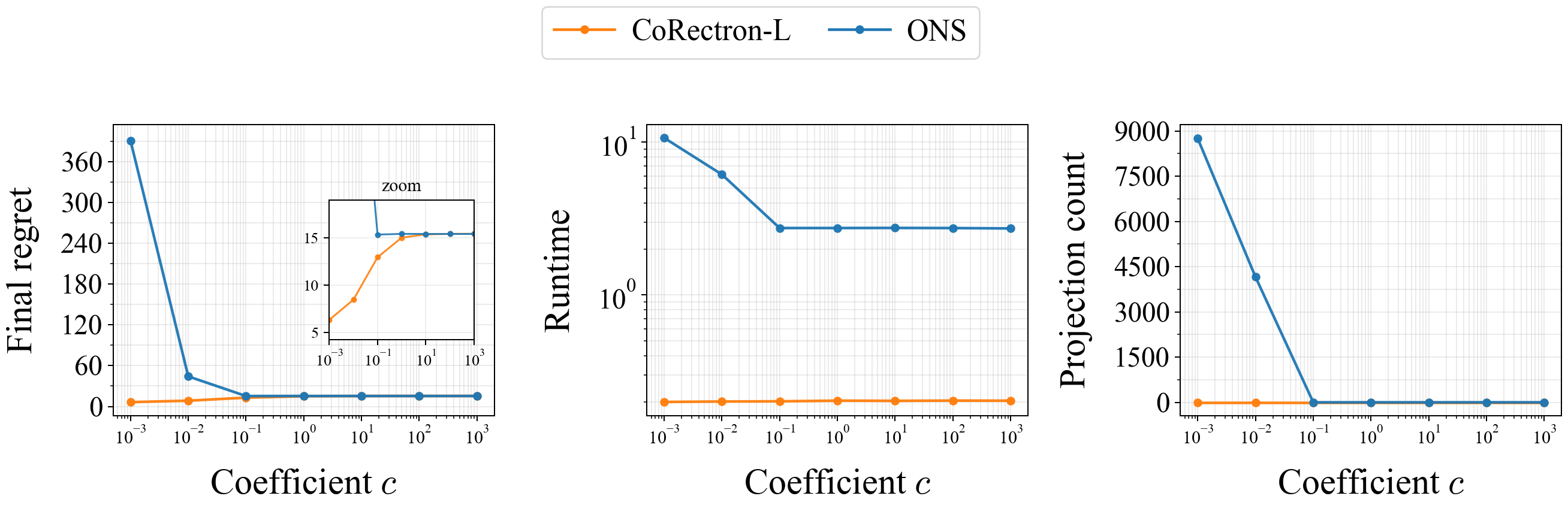}
  \caption{Comparison of CoRectron-L and ONS in the Linear setting. Final regret, runtime, and total number of Mahalanobis projections across coefficient values $c$ are shown. Shaded bands indicate the~95\% confidence intervals over 10 seeds, computed by using Student's $t$ distribution, namely mean $\pm t_{0.975,n-1}\, s/\sqrt{n_\text{seeds}}$, where $s$ is the sample standard deviation and $n_\text{seeds}$ is the number of seeds.
}
  \label{fig:curves-dashboard-linear}
\end{figure}

\begin{figure}[tb]
  \centering
  \includegraphics[
    width=.7\textwidth
  ]{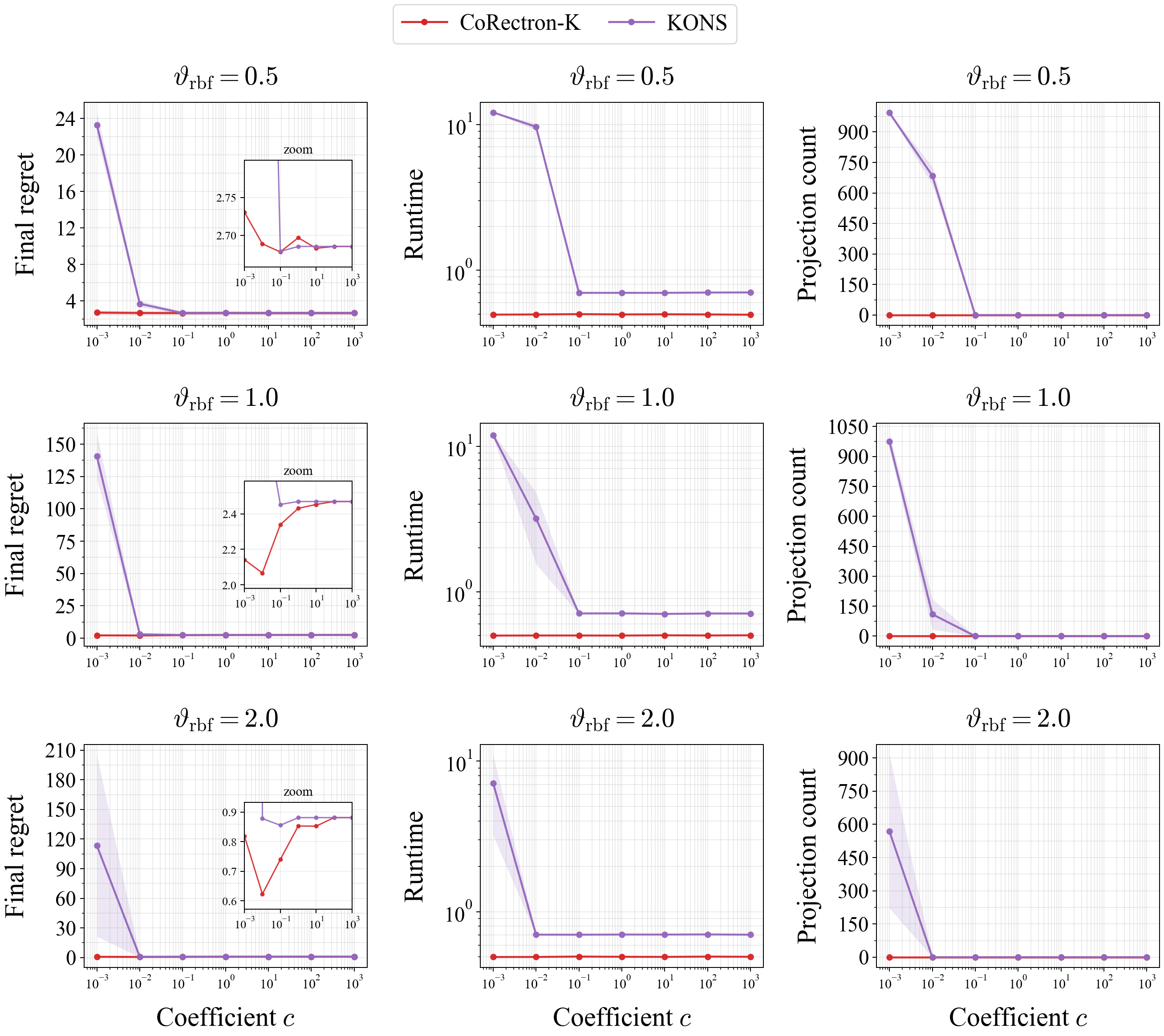}
  \caption{Comparison of CoRectron-K and KONS in the Kernel-RBF setting. Final regret, runtime, and total number of Mahalanobis projections across coefficient values $c$ are shown. Shaded bands indicate the~95\% confidence intervals over 10 seeds, computed as with \Cref{fig:curves-dashboard-linear}.
}
  \label{fig:curves-dashboard-kernel-rbf}
\end{figure}

\begin{figure}[tb]
  \centering
  \includegraphics[width=.45\textwidth]{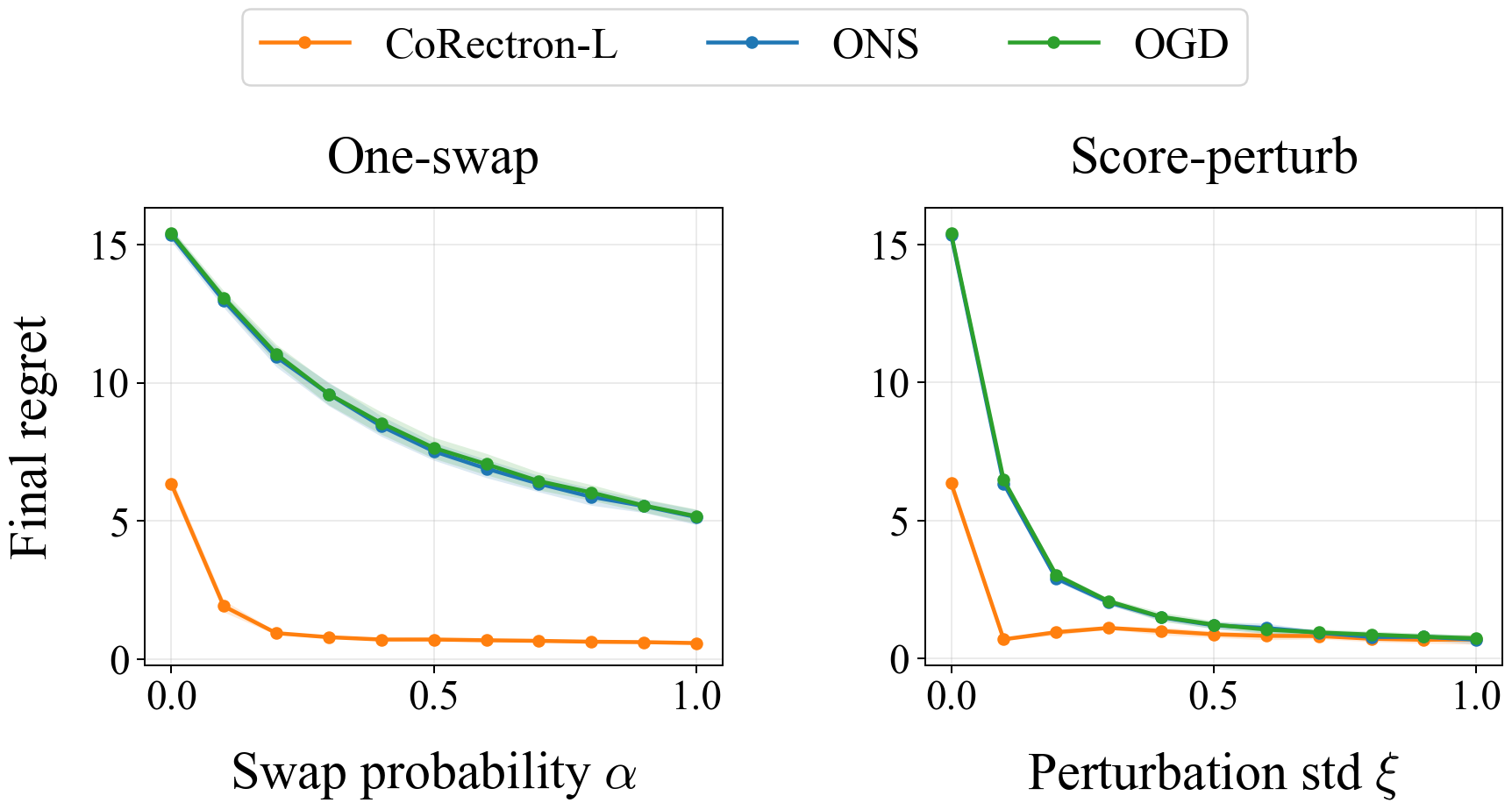}
  \caption{%
    Robustness to suboptimal-action feedback in the Linear setting.
    Left: final regret $R_T$ vs.\ swap probability $\alpha$ (one-swap model).
    Right: final regret $R_T$ vs.\ relative noise level $\xi$ (score-perturb model).
    Each method uses the best-performing coefficient $c$ from \cref{subsec:experiments-optimal}.
    All values are means over 10 seeds; shaded bands indicate 95\% confidence intervals
    computed as with \Cref{fig:curves-dashboard-linear}.
  }
  \label{fig:subopt-combined}
\end{figure}

\begin{itemize}[leftmargin=*, itemsep=0.2em, topsep=0.3em]
\item OGD: As the domain radius and the gradient norm are at most $1$, we set the reference step size to $\bar\eta_\mathrm{OGD} = \tfrac{2}{\sqrt{T}}$ and sweep $\eta_\mathrm{OGD}\leftarrow \bar\eta_\mathrm{OGD}/c$. Therefore, larger $c$ corresponds to a smaller step size.
\item ONS and KONS: As the domain radius is $1$, and the dimension and the step-size coefficient are $d=100$ and $\gamma_{\mathrm{ons}}=1/2$, respectively, we set the reference regularization strength to $\bar\epsilon= \tfrac{d}{4\gamma_{\mathrm{ons}}^2}=100$ based on \citet{Sakaue2025-vb}, and sweep $\epsilon\leftarrow c \bar\epsilon$. 
\item CoRectron-L and -K: As $B=1$ and $d=100$, we set the reference regularization to $\bar\lambda= B^2 d = d = 100$ and sweep $\lambda\leftarrow c\,\bar\lambda$.
\end{itemize}

\begin{remark}[On LightONS]
  As discussed in \cref{sec:related-work} and \cref{app:ons-comparison}, we can use LightONS~\citep{Wang2025-lj} instead of the standard ONS to reduce the number of Mahalanobis projection steps, thereby improving the runtime.
  In our experiments, however, we observe that the standard ONS with appropriately tuned regularization strength $\epsilon$ achieves its best regret without performing any projection, and thus LightONS does not yield a significant improvement in this case. 
  Therefore, we report the results with the standard ONS, where those with best-performing $\epsilon$ serve as a rough proxy for the performance of LightONS.
\end{remark}

\subsubsection{Results}
We report the following metrics, all averaged over the 10 seeds:
\begin{itemize}[leftmargin=*, itemsep=0.2em, topsep=0.3em]
\item Final regret: the cumulative regret $R_T$ at the end of the horizon $T$.
\item Total runtime: wall-clock time in seconds, measured by \texttt{time.perf\_counter}. To standardize runtime comparisons, we fix the number of threads of BLAS/OpenMP-related libraries to 1.
\item Total number of Mahalanobis projections: 
For ONS and KONS, we monitor the total number of rounds in which nontrivial Mahalanobis projections are performed, thereby observing how much they affect the runtime. 
We do not count trivial projections where the input is already within the unit ball.
For CoRectron and OGD, this count is identically zero as they do not perform such projections.
\end{itemize}

\textbf{Overall comparison.\;}
\Cref{fig:spread-dashboard-linear,fig:spread-dashboard-kernel-rbf}
show the final regret and total runtime of each method over the seven coefficient values in the sweep,
reported separately for the Linear and Kernel settings as averages over 10 random seeds. 
The darker points correspond to smaller $c$, and $c$ increases from left to right within each algorithm block.
For final regret, the best value within the sweep over the scaling coefficient $c$ is also annotated in the figures.

We first examine \Cref{fig:spread-dashboard-linear}, which leads to the following observations:
\begin{itemize}[leftmargin=*, itemsep=0.2em, topsep=0.3em]
\item CoRectron-L with $c = 10^{-3}$ achieves the best final regret.
      The best values of ONS and OGD are roughly tied, and both are more than twice as large as the best value of CoRectron-L.
\item CoRectron-L exhibits low final regret and stable runtime throughout the sweep, indicating stability to hyperparameter tuning.
      In contrast, for ONS and OGD, making the regularization too weak
      (that is, taking $c$ too small) causes both final regret and runtime to increase substantially.
      Moreover, the best final regret of these methods is attained at the smallest value of $c$ that avoids such deterioration, suggesting that they require relatively delicate hyperparameter tuning.
\item In terms of runtime, OGD is the fastest, while CoRectron-L is faster than ONS across all $c$ values.
\end{itemize}

We next turn to \Cref{fig:spread-dashboard-kernel-rbf}.
For reference, this figure also plots the results of the methods designed for the Linear setting,
namely CoRectron-L, ONS, and OGD.
The following observations can be made:
\begin{itemize}[leftmargin=*, itemsep=0.2em, topsep=0.3em]
\item CoRectron-K achieves the best final regret, slightly outperforming KONS.
      In addition, the final regret of all methods designed for the Linear setting is substantially worse
      than that of the methods designed for the Kernel setting,
      demonstrating the importance of using a contextual model that matches the underlying data generation.
\item As in the Linear case, KONS also exhibits deteriorated final regret and runtime when $c$ is too small.
      In contrast, CoRectron-K, like CoRectron-L in the Linear setting, achieves consistently low final regret and runtime
      across the tuning of the regularization parameter.
\end{itemize}

\textbf{Comparisons of CoRectron-L/-K and ONS/KONS.\;}
\Cref{fig:curves-dashboard-linear,fig:curves-dashboard-kernel-rbf}
provide a closer comparison between CoRectron-L/-K and ONS/KONS.
The horizontal axis is the coefficient $c$, and the vertical axes report
final regret, runtime, and the total number of Mahalanobis projections.
The shaded bands indicate the 95\% confidence intervals around the means over 10 seeds.
\begin{itemize}[leftmargin=*, itemsep=0.2em, topsep=0.3em]
\item Consistent with the overall comparison above, CoRectron achieves the best final regret, maintaining high performance and efficiency stably across changes in $c$.
\item The runtime curves of ONS/KONS track their projection-count curves,
      indicating that the frequency of Mahalanobis projections significantly affects the runtime of these methods.
      Accordingly, a major reason CoRectron runs more stably and faster than ONS/KONS is that it requires no Mahalanobis projection. This highlights the practical importance of eliminating Mahalanobis projection steps.
\end{itemize}

\subsection{Experiments with suboptimal-action feedback}\label[appendix]{subsec:experiments-suboptimal}
This section examines the robustness of the compared algorithms when the observed user action can be suboptimal, as discussed in \cref{sec:robust}.
We use the Linear setting of \cref{subsec:experiments-optimal} throughout;
refer to \cref{subsec:experiments-optimal} for the problem setting, data generation,
and descriptions of the compared methods (CoRectron-L, ONS, OGD).

\subsubsection{Experimental setup}

\textbf{Hyperparameter setting.\;}
We fix each method's regularization coefficient at the best-performing value identified
in the sweep of \cref{subsec:experiments-optimal}:
$c=10^{-3}$ for CoRectron-L, $c=10^{-1}$ for ONS, and $c=1$ for OGD.

\textbf{Suboptimal-action models.\;}
We consider two models of possibly suboptimal user actions.
In both models, the true utility vector $u_t^{\mathrm{base}}$ is generated as in
\cref{subsec:experiments-optimal}.
\begin{itemize}[leftmargin=*, itemsep=0.2em, topsep=0.3em]
\item One-swap (swap probability $\alpha\in[0,1]$):
      With probability $\alpha$, the item ranked $m$-th under $u_t^{\mathrm{base}}$
      (the marginal item in the optimal selection) is replaced by the item ranked $(m+1)$-th
      (the best excluded item); with probability $1-\alpha$, the optimal action
      $x_t^{\mathrm{base}}\in\argmax_{x\in\X_t^{\mathrm{base}}}\ip{u_t^{\mathrm{base}}}{x}$
      is revealed unchanged.
      This yields per-round suboptimality $\delta_t = u_t^{\mathrm{base}}(m) - u_t^{\mathrm{base}}(m+1) \ge 0$
      whenever the swap occurs, where $u_t^{\mathrm{base}}(k)$ denotes the $k$-th largest component.
      At $\alpha=0$ this coincides with the optimal-feedback setting.
      We sweep $\alpha\in\{0.0,0.1,\ldots,1.0\}$.
\item Score-perturb (relative noise level $\xi\ge 0$):
      The revealed action is taken as the argmax of the perturbed utility
      $u_t^{\mathrm{base}}+\xi\norm{u_t^{\mathrm{base}}}_2\,\varepsilon_t$,
      where $\varepsilon_t\sim\mathcal{N}(0,\Id_n)$ is drawn independently each round.
      Scaling the noise magnitude by $\norm{u_t^{\mathrm{base}}}_2$ makes $\xi$ a
      relative noise level that is invariant to the scale of~$u_t^{\mathrm{base}}$.
      At $\xi=0$ this recovers the optimal-feedback setting.
      We sweep $\xi\in\{0.0,0.1,\ldots,1.0\}$.
\end{itemize}

\begin{remark}[On MetaGrad-based method]
  We leave an experimental comparison with the MetaGrad-based method \citep{van-Erven2016-mg,Sakaue2025-vb} to future work, since implementing and tuning that baseline would require substantial additional engineering effort. It also needs to maintain $\Theta(\log T)$ ONS instances, leading to considerably higher computational cost than the methods compared here in practice.
\end{remark}

\subsubsection{Results}

\Cref{fig:subopt-combined} shows the final regret as a function of $\alpha$ and $\xi$.
The following observations can be made:
\begin{itemize}[leftmargin=*, itemsep=0.2em, topsep=0.3em]
\item CoRectron-L achieves the lowest final regret across all noise levels in both settings. The relative ordering among CoRectron-L, ONS, and OGD is preserved throughout the noise sweep in both models, with CoRectron-L maintaining a consistent advantage.
\item The final regret of all methods tends to decrease as $\alpha$ or $\xi$ increases. This reflects the definition of regret $R_T = \sum_{t=1}^{T} \ip{u^\mathrm{base}_t}{x_t^{\mathrm{base}}-\hat x_t^{\mathrm{base}}}$: as $x_t^{\mathrm{base}}$ becomes more suboptimal, the target the learner must compete against weakens, reducing $R_T$. This trend is not directly reflected by the worst-case regret upper bound in \cref{thm:robust-demo} involving the $\sqrt{2B\Delta_T(u)H_T}$ term, which is increasing in the cumulative suboptimality $\Delta_T(u)$, suggesting that higher suboptimality should worsen performance. Bridging this gap---developing a theoretical guarantee that explains the empirical decrease in regret as the user action becomes more suboptimal---is an interesting open problem.
\end{itemize}

\end{document}